%
%
\documentclass{llncs}
\usepackage{times}
\usepackage{tikz}
\usepackage{graphicx}
\usepackage{latexsym}
\usepackage{eurosym}
\usepackage{amsmath,amssymb,stmaryrd,pifont,wasysym}
\usepackage{dsfont}
\usepackage{array}
\usepackage[ruled,vlined,linesnumbered]{algorithm2e}
\usepackage{pgfplots}
\pgfplotsset{compat=1.14}
\usepackage{enumitem}
\newcommand{\outranks}{\succcurlyeq}

\newcommand{\X}{\mathbb{X}}
\newcommand{\N}{\mathcal{N}}
\newcommand{\T}{\mathcal{T}}
\newcommand{\M}{\mathcal{M}}
\newcommand{\C}{C^1 \prec \dots \prec C^p}
\newcommand{\NCS}{\textsc{NCS}\ }
\newcommand{\frontier}{b}

\def \Na{{cost}}
\def \Nb{{acceleration}}
\def \Nc{{braking}}
\def \Nd{{road\ holding}}

\def \Excellent{$\star \star \star$}
\def \Good{$\star \star$}
\def \Bad{$\star $}


\begin{document}

\title{An efficient SAT formulation for learning multiple criteria non-compensatory sorting rules from examples}

\titlerunning{SAT for NCS}  
%
\author{K. Belahc\`ene\inst{1} \and C. Labreuche\inst{2} \and N. Maudet\inst{3} \and V. Mousseau\inst{1} \and W. Ouerdane\inst{1}
}
\authorrunning{}
%
\tocauthor{}
\institute{LGI, CentraleSup\'elec, Universit\'e Paris-Saclay, Gif-sur-Yvette, France
\and
Thales Research \& Technology, 91767 Palaiseau Cedex, France
\and Sorbonne Universit\'es, UPMC Univ Paris 06, CNRS, LIP6 UMR 7606, 75005 Paris}

\maketitle              

\begin{abstract}
The literature on Multiple Criteria Decision Analysis (MCDA) proposes several methods  in order to sort alternatives evaluated on several attributes into ordered classes. Non Compensatory Sorting models (NCS) assign alternatives to classes based on the way they compare to multicriteria profiles separating the consecutive classes. Previous works have proposed approaches to learn the parameters of a NCS model based on a learning set. Exact approaches based on mixed integer linear programming ensures that the learning set is best restored, but can only handle datasets of limited size. Heuristic approaches can handle large learning sets, but do not provide any guarantee about the inferred model. In this paper, we propose an alternative formulation to learn a NCS model. This formulation, based on a SAT problem, guarantees to find a model fully consistent with the learning set (whenever it exists), and is computationally much more efficient than existing exact MIP approaches. 
\keywords{Multiple criteria Sorting, Non-Compensatory Sorting, learning, SAT}
\end{abstract}
\section{Introduction}

Multiple Criteria Decision Analysis (MCDA) aims at supporting a decision maker (DM) in making decisions among options described according various points of view, formally represented by monotone functions called \emph{criteria}. In this paper, decisions are modeled as an \emph{ordinal sorting problem}, where alternatives are to be assigned to a class in the  set of predefined ordered classes.

The literature contains several multiple criteria sorting methods which can be distinguished into \textit{(i)} value based sorting methods (see e.g. \cite{MARICHAL2005,GRECO-ET-AL2010,GRECO2011,SOYLU2011})), \textit{(ii)} outranking based sorting methods (see e.g. \cite{bouyssoumarchant2007a,bouyssoumarchant2007b,leroy2011ADT,sobrieADT2013,Zheng-et-al-COR2014,MEYER2017}) and, \textit{(iii)} rule based sorting methods  (see e.g. \cite{gms2002,gms2016}). 

We address the problem of ordinal sorting with an outranking based sorting model: the non compensatory model (NCS, cf.\cite{bouyssoumarchant2007a,bouyssoumarchant2007b}), in which an object need not be at least as good as the profile’s value on all criteria.  NCS assigns an alternative to a category above a profile if it is at least as good as the profile on a \emph{sufficient} coalition of criteria; the family of sufficient coalitions can be any upset of the set of all subsets of criteria. A particular case of NCS occurs when the family of sufficient coalitions of criteria can be defined using additive criteria weights and threshold. The literature refers to this additive case as the MR-Sort model (see e.g. \cite{leroy2011ADT,sobrieADT2013}). Both MR-Sort and NCS models are particular cases of the Electre Tri model, a method for sorting alternatives in ordered categories based on an outranking relation (see \cite{roybouyssou1993}, pp. 389-401 or \cite{bouyssouetal2006}, pp. 381-385).

Our aim is to learn a NCS model from preference information given in the form of a reference assignment. Such approach makes it possible to integrate the decision maker preferences in the model without asking her for the preference parameter values. Such indirect elicitation has been developped for Electre Tri \cite{ms98}, MR-sort \cite{leroy2011ADT}, UTADIS \cite{zd2002}.

\begin{center}
	\begin{algorithm}[ht!]
			\caption{\sc{Learning a model-based classifier}} \label{algo1}
            
			\KwIn{a tuple of criteria, a tuple of ordered categories, a multicriteria sorting model, an assignment of alternatives to categories}
			
			\KwResult{a representation of the assignment in the model, or \emph{None} if the assignment is not representable in the model}
            
            \textbf{encode} the assignment into a formulation $\Phi$
            
            try to \textbf{solve} the formulation $\Phi$
				
               \qquad  \textbf{decode} the solution into a model
                
               \qquad  return the model
               
			{except} NoSolution
            
            	\qquad return \emph{None}

	\end{algorithm}
	
\end{center}

Algorithm \ref{algo1} describes a general framework that has been widely used (see e.g. \cite{UTA,leroy2011ADT})  in order to leverage the power of generic mathematical programming solvers to learn the parameters of a multicriteria sorting procedure from examples. The work flow is divided into three phases: 
 the problem is \emph{encoded} into a formulation, this formulation is passed to an external \emph{solver}, and a solution, if found, is then \emph{decoded} into a model.
The \emph{faithfulness} of this approach is guaranteed if, and only if: \begin{enumerate}
\item the encoded formulation must have a solution as soon as the assignment can be represented in the model;
\item the solver is complete, in the sense that it yields a solution if and only if there is at least one;
\item the decoded model actually represents the assignment.
\end{enumerate}
 
To the authors' knowledge, until now, general NCS models have been deemed too computationally difficult to address with this approach. Restrictions to MR-Sort have been considered, either in \cite{leroy2011ADT} with a mixed integer programming (MIP) formulation, but this approach turned out to be inadequate to handle large datasets, or by \cite{sobrieADT2013,sobrieetal2015ADT} using a metaheuristic solving procedure that handles large datasets but offers no guarantee of its completeness (cf. point 2 above). 
The aim of this paper is to investigate an alternative venue: considering U-NCS, a broad subset of NCS models, that encompasses MR-Sort (for precise definitions of these models, see Section \ref{subsection : ncs models}), and formulating the problem of representing an assignment by a model in U-NCS as a boolean satisfiability problem (SAT). We prove that both the encoding and the decoding satisfy the faithfulness requirements 1 and 3 above. We are thus able to leverage the advances made in the field of generic SAT solvers, to reach unprecedented computational performance in the learning of non compensatory sorting models. 

The paper is organized as follows. In Section \ref{section:problem}, we present the notions and concepts related to the formulation of the problem of learning parameters of a non compensatory sorting model. 
In Section \ref{section : sat}, we develop our binary satisfaction (SAT) problem formulation for inferring a U-NCS model from a learning set, and show it has the desired properties of necessity and sufficiency regarding the representation of an assignment in the U-NCS model. In Section \ref{section : MIP learning MR-SORT}, we recall the bases of using a mixed integer formulation to learn the parameters of a MR-Sort model. After that, we propose experiments to assess the pertinence and interest of this formulation in Section \ref{section:implem}. In Section \ref{section:discussion}, we discuss the obtained results. Finally, in Section \ref{section:conclu},  we conclude by pointing some future interesting perspectives . 

\section{Position of the problem}
\label{section:problem}
In this section, we detail the notions permitting to formulate the problem of learning the parameters of a non compensatory sorting models. We define the vocabulary of ordinal sorting and we formalize the notion of ordinal sorting procedure. We are then able to precisely describe the problem of representing a given assignment in a given ordinal sorting model. We proceed by describing the broad class of non compensatory sorting models, and two narrower subclasses of particular interest, namely U-NCS and MR-Sort. In Section \ref{subsection : ncs}, we formulate the NCS model and define its parameters. In Section \ref{subsection : learning}, we specify the expected inputs and outputs of the learning problem.
\subsection{Vocabulary of multicriteria ordinal sorting}
\label{subsection : ncs}

An ordinal sorting problem consists in assigning a category, taken among a given, finite set of \emph{categories} $\C$ ordered by desirability, to \emph{alternatives} described by several attributes.

We assume $\mathcal{N}$ is a finite set of \emph{criteria}, where each criterion $i\in\mathcal{N}$ maps alternatives to values among an ordered set $(\X_i,\le_i)$, the order relation $\le_i$ meaning ``weakly worse that''\footnote{ This setting differs from the one described by \cite{bouyssoumarchant2007a,bouyssoumarchant2007b}, in the sense that we suppose the attributes describing the alternatives are already sorted by the criteria according to their desirability: here, the order relation on each set $\X_i$ needs not be constructed from holistic preference statements, but is assumed to be established beforehand, e.g. in a previous phase of a decision aiding process structured according to \cite{bouyssouetal2006} (this is often the case in applications).}. Alternatives are thus described by a $|\mathcal{N}|$-tuple of multiple criteria values called \emph{profiles}. We denote $\X=\prod_{i\in\N} \X_i$ the set of all possible profiles-- either describing actual alternatives or virtual ones.


As an analogy with a voting system where criteria would act as voters, subsets of $\N$ are called \emph{coalitions} of criteria. 
The following function maps a pair of profiles to the coalition of criteria  weakly favorable to the former.
$$\begin{array}{lccl} O_\N: & \X \times \X & \longrightarrow & \mathcal{P(N)} \\ & (x,y) & \mapsto & \{i\in\N : x_i \ge_i y_i \} \end{array}$$
When $O_\N(x,y)=\N$, the alternative $x$ is at least as good as the alternative $y$ with respect to all criteria, and we say $x$ \emph{weakly dominates} $y$ in the sense of Pareto. Weak dominance defines a partial order $\Delta$ on the set of profiles $\X$.

In the remainder of this article, we assume the sets of criteria $\N$, of profiles $\X$ and of categories $C$ are given, and we endeavor to define a \emph{sorting procedure}, a non-decreasing mapping $\X$, ordered by dominance to $\C$.

\subsection{Non-compensatory sorting models}
\label{subsection : ncs models}
In \cite{bouyssoumarchant2007a,bouyssoumarchant2007b}, Bouyssou and Marchant define a set of sorting procedures deemed as \emph{non-compensatory}. We briefly recall the definition of the \emph{non compenastory sorting (NCS) model}, as well as two specific subsets of this model, \emph{U-NCS} and \emph{MR-Sort}. 

All these classes of non-compensatory sorting models, rely on the notions of satisfactory values of the criteria and sufficient coalitions of criteria that combines into defining the fitness of an alternative: an alternative is deemed fit if it has satisfactory values on a sufficient coalition of criteria.

This notion is straightforward to implement when there are only two categories: the sufficient coalitions $\T$ form an upset\footnote{An upset is an upward closed subset of an ordered set, i.e. if $b$ is greater than $a$ and $a$ belongs to an upset, then so does $b$.} of the power set of $\N$ and, for each criterion $i\in\N$, the satisfactory values $\mathcal{A}_i \subset \X_i$ form an upset that can be described by its lower bound $b_i\in\X_i$ -- meaning a value is satisfactory if, and only if, it is greater or equal to the threshold $b_i$, thus defining a \emph{limiting profile} $b\in\X$. With more than two categories, the notions of sufficient coalitions and satisfactory values are declined per category -- denoted respectively $\langle \mathcal{A}_i^k \rangle_{i\in\N,k\in[1..p-1]}$ and $\langle \T^k \rangle_{k\in[1..p-1]}$. The ordering of the categories $\C$ translates into a nesting of the sufficient coalitions:
\begin{subequations}
\begin{equation} \forall k\in[1..p-1],\ \T^k \textrm{ is an upset of }(2^\N,\subseteq) \textrm{ and}\ \T^1 \supseteq \dots \supseteq \T^{p-1} 
\label{eq : nested sufficient coalitions}
\end{equation}
and also a nesting of the satisfactory values:
\begin{equation}
\forall i\in\N,\ \forall k\in[1..p-1],\ \mathcal{A}_i^k \textrm{ is an upset of }(\X_i, \le_i) \textrm{ and }\mathcal{A}_i^1 \subseteq \dots \subseteq \mathcal{A}_i^{p-1}
\label{eq : nested satisfactory values}
\end{equation}
Condition \eqref{eq : nested satisfactory values} translates into an ordering of the values $\langle b_i^k \rangle_{k\in[1..p-1]}$ for a given criterion $i\in\N$, or an ordering of the limiting profiles:
\begin{equation}
b^1,\dots,b^{p-1} \textrm{ is a non-decreasing sequence of }(\X,\Delta) 
\label{eq : increasing profiles}
\end{equation}
\end{subequations}
For convenience, these sequences are augmented with trivial elements on both ends: $\T^0 := \mathcal{P}(\N)$, $T^p := \emptyset$, $\forall i\in\N \mathcal{A}_i^0=\X_i,\ \mathcal{A}_i^p=\emptyset$, $b^0:=\bot, b^p:=\top$.

\begin{definition}[Non-compensatory sorting NCS, \cite{bouyssoumarchant2007b}]
Given a set of criteria $\N$ and an ordered set of categories $\C$, for all pairs of tuples $(\langle b \rangle,\langle \T \rangle)$ where $\langle b \rangle$ satisfies \eqref{eq : increasing profiles} and $\langle \T \rangle$ satisfies \eqref{eq : nested sufficient coalitions}, the sorting function $NCS_{(\langle b \rangle,\langle \T \rangle)}$ maps a profile $x\in\X$ to the category $C^k$ such that $O_\N (x,b^k)\in\T^k$ and $O_\N (x,b^{k+1})\notin\T^{k+1}$.
\end{definition}

The set of preference parameters -- all the pairs $(\langle b \rangle,\langle \T \rangle)$ satisfying  \eqref{eq : nested sufficient coalitions} and \eqref{eq : increasing profiles} -- can be considered too wide and too unwieldy for practical use in the context of a decision aiding process. Therefore, following \cite{bouyssoumarchant2007b}, one may consider to restrict either the sequence of limiting profiles, or the sequence of sufficient coalitions. In order to remain compatible with Electre Tri, we elect the latter.

\begin{definition}[Non-compensatory sorting with a unique set of sufficient coalitions U-NCS]
Given a set of criteria $\N$ and an ordered set of categories $\C$, for all pairs $(\langle b \rangle,\T)$ where the tuple $\langle b \rangle$ satisfies \eqref{eq : increasing profiles} and $\T$ is an upset of coalitions, the sorting function \emph{U-NCS}$_{(\langle b \rangle,\T)}$ maps a profile $x\in\X$ to the category $C^k$ such that $O_\N (x,b^k)\in\T$ and $O_\N (x,b^{k+1})\notin\T$.
\end{definition}

A further restriction of U-NCS is of particular interest: in the MR-Sort model, introduced in \cite{leroy2011ADT}, the sufficient coalitions are represented in a compact form which is more amenable to linear programming. As an analogy to a voting setting, each criterion $i\in\N$ may be assigned a \emph{voting power} $w_i \ge 0$ so that a given coalition of criteria $B\subseteq \N$ is deemed sufficient if, and only if, its combined voting power $\sum_{i\in B} w_i$ is greater than a given \emph{qualification threshold} $\lambda$. 

\begin{definition}[majority rule sorting MR-Sort]
Given a set of criteria $\N$, the majority rule \emph{MR} maps a pair $(\langle w \rangle,\lambda)$, where $\langle w \rangle$ is a $|\N|$-tuple of nonnegative real numbers and $\lambda$ a nonnegative real number, to an upset \emph{MR}$(\langle w \rangle,\lambda)$ of the power set of $\N$ defined by the relation:
\begin{equation}
\forall B \subseteq \N,\ B\in \emph{MR}(\langle w \rangle,\lambda) \iff \sum_{i\in B} w_i \ge \lambda
\tag{MR} \label{eq : majority rule}
\end{equation}
Given, in addition, a set of categories $\C$, for all triples $(\langle b \rangle, \langle w \rangle, \lambda)$ where the tuple $\langle b \rangle$ satisfies \eqref{eq : increasing profiles}, $\langle w \rangle$ is a $|\N|$-tuple of nonnegative real numbers and $\lambda$ a nonnegative real number, \emph{MR-Sort}$_{(\langle b \rangle, \langle w  \rangle, \lambda)}$ is the sorting function \emph{U-NCS}$_{(\langle b \rangle,\emph{MR}(\langle w \rangle,\lambda))}$. 
\end{definition}

\begin{example}
Terry is a journalist and prepares a car review for a special issue. He considers a number of popular  car models, and wants to sort them in order to present a sample of cars ``selected for you by the redaction'' to the readers.

This selection is based on 4 criteria : cost (\euro{}), acceleration (measured by the time, in seconds, to reach 100 km.h$^{-1}$ from full stop -- lower is better), braking power and road holding, both measured on a qualitative scale ranging from 1 (lowest performance) to 4 (best performance). The performances of six models are described in Table \ref{tab:PerformanceTable}:
 
{\small
\begin{table}
\begin{center}
\begin{tabular}{|c|c|c|c|c|} \hline \bf {\quad model \quad} & \bf {\quad\Na\quad} & \bf {\quad\Nb\quad} & \bf {\quad\Nc\quad} & \bf {\quad\Nd\quad} \\ \hline
$m_1$ & 16 973 & 29 & 2.66 & 2.5 \\ $m_2$ & 18 342 & 30.7 & 2.33 & 3 \\ $m_3$ & 15 335 & 30.2 & 2 & 2.5 \\ $m_4$ & 18 971 & 28 & 2.33 & 2 \\ $m_5$ & 17 537 & 28.3 & 2.33 & 2.75 \\ $m_6$ & 15 131 & 29.7 & 1.66 & 1.75 \\ \hline 
\end{tabular}
\end{center}
\label{tab:PerformanceTable}
\caption{Performance table}
\end{table}
}
In order to assign these models to a class among $C^{1^\star}$ {(average)} $\prec C^{2^\star}$ {(good)} $\prec C^{3^\star}$ {(excellent)}, Terry considers a NCS model: \begin{itemize}
\item where the values on each criterion are sorted between $1^\star$ (average) and $2^\star$ (good) by the following profiles: $b^{1^\star}_\Na= 17\ 250$, $b^{1^\star}_\Nb=30$, $b^{1^\star}_\Nc=2.2$, $b^{1^\star}_\Nd=1.9$.
The boundary between $2^\star$ and $3^\star$ (excellent) is fixed by the profiles: $b^{2^\star}_\Na= 15\ 500$, $b^{2^\star}_\Nb=28.8$, $b^{2^\star}_\Nc=2.5$, $b^{2^\star}_\Nd=2.6$.
\smallskip

Figure \ref{fig : Terry criteria} and Table \ref{tab : Terry table} depict the performance of the six alternatives.

\begin{table}[ht!]
\begin{center}
\begin{tabular}{|c|c|c|c|c|} \hline \bf {\quad model \quad} & \bf {\quad\Na\quad} & \bf {\quad\Nb\quad} & \bf {\quad\Nc\quad} & \bf {\quad\Nd\quad} \\ \hline
$m_1$ & \Good & \Good & \Excellent & \Good \\ $m_2$ & \Bad & \Bad & \Good & \Excellent \\ $m_3$ & \Excellent & \Bad & \Bad & \Good \\ $m_4$ & \Bad & \Excellent & \Good & \Good \\ $m_5$ & \Bad & \Excellent & \Good & \Excellent \\ $m_6$ & \Excellent & \Good & \Bad & \Bad \\ \hline 

\end{tabular}
\end{center}
\caption{Categorization of performances}
\label{tab : Terry table}
\end{table}

\begin{figure}[ht!]
\begin{tikzpicture}
\tikzstyle{alt-O}=[circle,inner sep = 2.5pt,draw, fill=black!50, line width=0.5pt]
\tikzstyle{alt-X}=[rectangle, inner sep = 3pt, draw, fill=black!50, line width=0.5pt]

\draw[decorate, decoration={brace}, yshift=2.5ex]  (1,0) -- node[above=1ex] {\Bad}  (5.3,0);
\draw[decorate, decoration={brace}, yshift=2.5ex]  (5.7,0) -- node[above=1ex] {\Good}  (8.3,0);
\draw[decorate, decoration={brace}, yshift=2.5ex]  (8.7,0) -- node[above=1ex] {\Excellent}  (11,0);
\draw [->,thick] (1,0) -- (11,0);
\draw (6,-0.2) [thick] -- (6,0.2);
\draw (3.4,-0.2) [thick] -- (3.4,0.2);
\draw (9.2,-0.2) [thick] -- (9.2,0.2);
\draw (2,-0.2) [thick] -- (2,0.2);
\draw (5,-0.2) [thick] -- (5,0.2);
\draw (9.8,-0.2) [thick] -- (9.8,0.2);
\draw (5.5,-0.2) [thick] -- (5.5,0.3);
\draw (8.5,-0.2) [thick] -- (8.5,0.3);



\path
node at (6,-0.5) {$m_1$}
node at (3.4,-0.5) {$m_2$}
node at (9.2,-0.5) {$m_3$}
node at (2,-0.5) {$m_4$}
node at (5,-0.5) {$m_5$}
node at (9.8,-0.5) {$m_6$}
node at (5.6,0.6) {$b^{1^\star}$}
node at (8.6,0.6) {$b^{2^\star}$}
;

\path
node at (11,0) [anchor=west] {\bf \quad \Na}
;

\draw[decorate, decoration={brace}, yshift=2.5ex]  (1,-2) -- node[above=1ex] {\Bad}  (3.8,-2);
\draw[decorate, decoration={brace}, yshift=2.5ex]  (4.2,-2) -- node[above=1ex] {\Good}  (6.4,-2);
\draw[decorate, decoration={brace}, yshift=2.5ex]  (6.8,-2) -- node[above=1ex] {\Excellent}  (11,-2);
\draw [->,thick] (1,-2) -- (11,-2);
\draw (6,-2.2) [thick] -- (6,-1.8);
\draw (2.6,-2.2) [thick] -- (2.6,-1.8);
\draw (3.6,-2.2) [thick] -- (3.6,-1.8);
\draw (8,-2.2) [thick] -- (8,-1.8);
\draw (7.4,-2.2) [thick] -- (7.4,-1.8);
\draw (4.6,-2.2) [thick] -- (4.6,-1.8);
\draw (4,-2.2) [thick] -- (4,-1.7);
\draw (6.6,-2.2) [thick] -- (6.6,-1.7);

\path
node at (6,-2.5) {$m_1$}
node at (2.6,-2.5) {$m_2$}
node at (3.6,-2.5) {$m_3$}
node at (8,-2.5) {$m_4$}
node at (7.4,-2.5) {$m_5$}
node at (4.6,-2.5) {$m_6$}
node at (4.1,-1.4) {$b^{1^\star}$}
node at (6.7, -1.4) {$b^{2^\star}$}
;

\path
node at (11,-2) [anchor=west] {\bf \quad \Nb}
;

\draw[decorate, decoration={brace}, yshift=2.5ex]  (1,-4) -- node[above=1ex] {\Bad}  (4.4,-4);
\draw[decorate, decoration={brace}, yshift=2.5ex]  (4.8,-4) -- node[above=1ex] {\Good}  (5.8,-4);
\draw[decorate, decoration={brace}, yshift=2.5ex]  (6.2,-4) -- node[above=1ex] {\Excellent}  (11,-4);
\draw [->,thick] (1,-4) -- (11,-4);
\draw (6.6,-4.2) [thick] -- (6.6,-3.8);
\draw (5.3,-4.2) [thick] -- (5.3,-3.8);
\draw (4,-4.2) [thick] -- (4,-3.8);
\draw (5.3,-4.2) [thick] -- (5.3,-3.8);
\draw (5.3,-4.2) [thick] -- (5.3,-3.8);
\draw (2.6,-4.2) [thick] -- (2.6,-3.8);
\draw (4.6,-4.2) [thick] -- (4.6,-3.7);
\draw (6,-4.2) [thick] -- (6,-3.7);

\path
node at (6.6,-4.5) {$m_1$}
node at (4,-4.5) {$m_3$}
node at (5.3,-4.3) [anchor = north, text width = 3.6em, align=center]{$m_4$, $m_5$, $m_2$}
node at (2.6,-4.5) {$m_6$}
node at (4.7,-3.4) {$b^{1^\star}$}
node at (6.1,-3.4) {$b^{2^\star}$}
;

\path
node at (11,-4) [anchor=west] {\bf \quad \Nc}
;

\draw[decorate, decoration={brace}, yshift=2.5ex]  (1,-6) -- node[above=1ex] {\Bad}  (2.7,-6);
\draw[decorate, decoration={brace}, yshift=2.5ex]  (3.1,-6) -- node[above=1ex] {\Good}  (5.2,-6);
\draw[decorate, decoration={brace}, yshift=2.5ex]  (5.6,-6) -- node[above=1ex] {\Excellent}  (11,-6);
\draw [->,thick] (1,-6) -- (11,-6);
\draw (5,-6.2) [thick] -- (5,-5.8);
\draw (6.7,-6.2) [thick] -- (6.7,-5.8);
\draw (5,-6.2) [thick] -- (5,-5.8);
\draw (3.3,-6.2) [thick] -- (3.3,-5.8);
\draw (5.8,-6.2) [thick] -- (5.8,-5.8);
\draw (2.5,-6.2) [thick] -- (2.5,-5.8);
\draw (2.9,-6.2) [thick] -- (2.9,-5.7);
\draw (5.4,-6.2) [thick] -- (5.4,-5.7);

\path
node at (5,-6.3) [anchor = north, text width = 2em]{$m_1$ $m_3$}
node at (6.7,-6.5) {$m_2$}
node at (3.3,-6.5) {$m_4$}
node at (5.8,-6.5) {$m_5$}
node at (2.5,-6.5) {$m_6$}
node at (3.0,-5.4) {$b^{1^\star}$}
node at (5.5,-5.4) {$b^{2^\star}$}
;

\path
node at (11,-6) [anchor=west] {\bf \quad \Nd}
;
\end{tikzpicture}
\caption{Representation of performances w.r.t. catergory limits}
\label{fig : Terry criteria}
\end{figure}
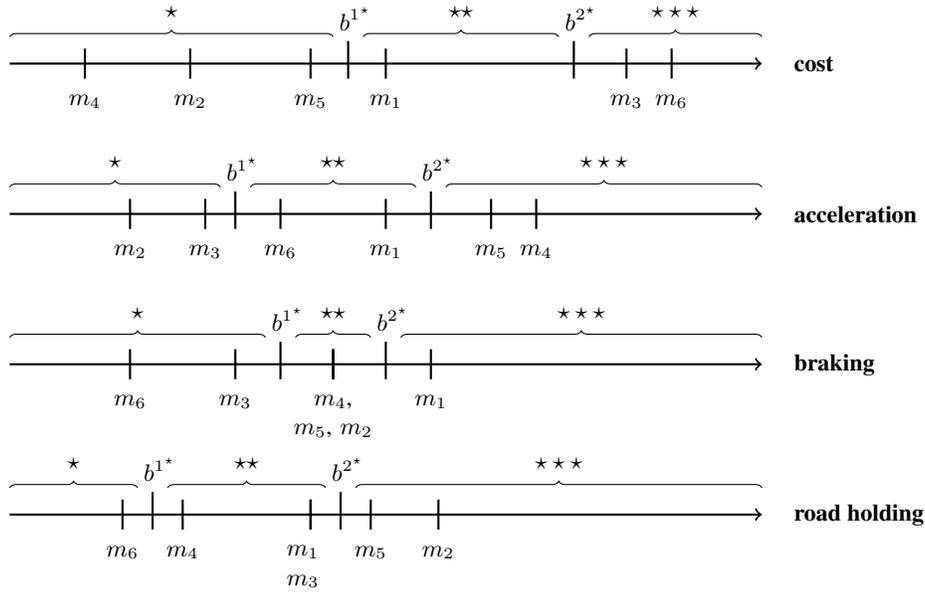

\end{itemize}
\begin{itemize}
\item These appreciations are then aggregated by the following rule: \emph{an alternative is categorized good or excellent if it is good or excellent on \Na\ or \Nb, and good or excellent on \Nc\ or \Nd. It is categorized excellent if it is excellent on \Na\ or \Nb, and excellent on \Nc\ or \Nd}. Being excellent on some criterion does not really help to be considered good overall, as expected from a non compensatory model. Sufficient coalitions are represented on Figure \ref{fig:coalitions}.
\end{itemize}

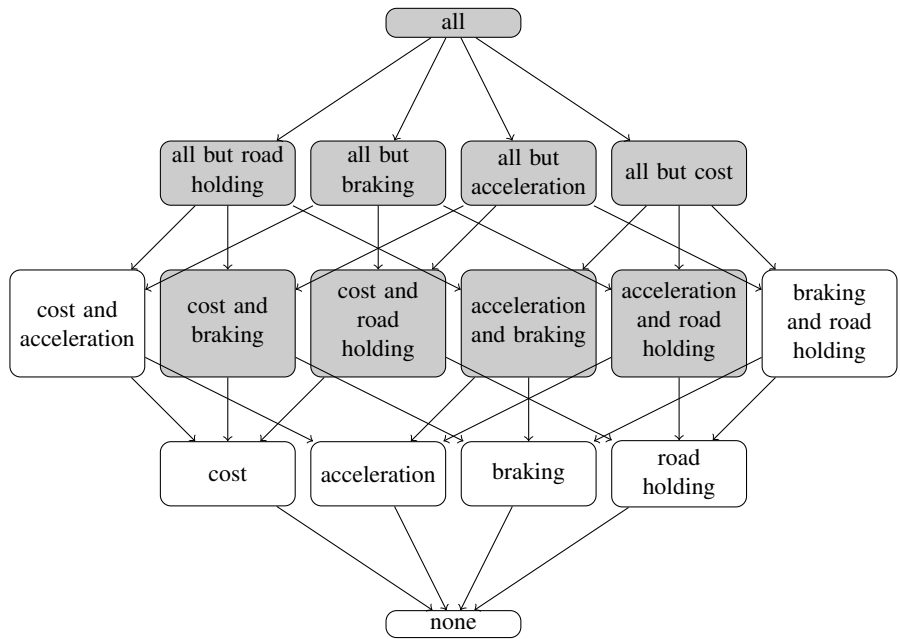
\begin{figure}[ht!]

\begin{center}
\begin{tikzpicture}

\tikzstyle{suff}=[rectangle, fill=black!20, text width=5em, text height= 1.2ex, align = center, rounded corners, draw]
\tikzstyle{insuff}=[rectangle, rounded corners, text width=5em, draw,
align = center]
\tikzstyle{sup}=[->]

\path (-5,0) node(AB) [insuff, minimum height = 10ex] {\Na\ and \Nb}
(-3,0) node(AC) [suff, minimum height = 10ex] {\Na\ and \Nc}
(-1,0) node(AD) [suff, minimum height = 10ex] {\Na\ and \Nd}
(1,0) node(BC) [suff, minimum height = 10ex] {\Nb\ and \Nc}
(3,0) node(BD) [suff, minimum height = 10ex] {\Nb\ and \Nd}
(5,0) node(CD) [insuff, minimum height = 10ex] {\Nc\ and \Nd}
(-3,2) node(ABC) [suff, minimum height = 6ex] {all but \Nd}
(-1,2) node(ABD) [suff, minimum height = 6ex] {all but \Nc}
(1,2) node(ACD) [suff, minimum height = 6ex] {all but \Nb}
(3,2) node(BCD) [suff, minimum height = 6ex] {all but \Na}
(0,4) node(ABCD) [suff] {all}
(-3,-2) node(A) [insuff, minimum height = 6ex] {\Na}
(-1,-2) node(B) [insuff, minimum height = 6ex] {\Nb}
(1,-2) node(C) [insuff, minimum height = 6ex] {\Nc}
(3,-2) node(D) [insuff, minimum height = 6ex] {\Nd}
(0,-4) node(vide) [insuff] {none}
;
\draw (ABCD) [sup] -- (ABC);
\draw (ABCD) [sup] -- (ABD);
\draw (ABCD) [sup] -- (ACD);
\draw (ABCD) [sup] -- (BCD);

\draw (ABC) [sup] -- (AB);
\draw (ABC) [sup] -- (AC);
\draw (ABC) [sup] -- (BC);
\draw (ABD) [sup] -- (AB);
\draw (ABD) [sup] -- (AD);
\draw (ABD) [sup] -- (BD);
\draw (ACD) [sup] -- (AD);
\draw (ACD) [sup] -- (AC);
\draw (ACD) [sup] -- (CD);
\draw (BCD) [sup] -- (BC);
\draw (BCD) [sup] -- (BD);
\draw (BCD) [sup] -- (CD);
\draw (AB) [sup] -- (A);
\draw (AB) [sup] -- (B);
\draw (AC) [sup] -- (A);
\draw (AC) [sup] -- (C);
\draw (AD) [sup] -- (A);
\draw (AD) [sup] -- (D);
\draw (BC) [sup] -- (B);
\draw (BC) [sup] -- (C);
\draw (BD) [sup] -- (B);
\draw (BD) [sup] -- (D);
\draw (CD) [sup] -- (C);
\draw (CD) [sup] -- (D);
\draw (A) [sup] -- (vide);
\draw (B) [sup] -- (vide);
\draw (C) [sup] -- (vide);
\draw (D) [sup] -- (vide);

\end{tikzpicture}

\caption{Sufficient (grey) and insufficient (white) coalitions of criteria. Arrows denote strength - pointing towards the weaker}
\label{fig:coalitions}
\end{center}
\end{figure}
\noindent Finally, the model yields the following assignments (Table \ref{tab:ModelAssignments}):

\begin{table}[ht!]
\begin{center}
\begin{tabular}{|c|c|} \hline \bf {\quad model \quad} & \bf {\quad assignment \quad} \\ \hline
$m_1$ & \Good \\ $m_2$ & \Bad \\ $m_3$ & \Good \\ $m_4$ & \Good \\ $m_5$ & \Excellent \\ $m_6$ & \Bad \\ \hline 
\end{tabular}
\end{center}
\caption{Model Assignments}
\label{tab:ModelAssignments}
\end{table}

\end{example}

\subsection{The disaggregation paradigm: learning preference parameters from assignment examples}
\label{subsection : learning}


For a given decision situation, assuming the NCS model is relevant to structure the DM's preferences, what parameters should be selected to fully specify the NCS model that corresponds to the DM viewpoint? An option would be to simply ask the decision maker to describe, to her best knowledge, the limit profiles between class and to enumerate the minimal sufficient coalitions. In order to get this information as quickly and reliably as possible, an analyst could make good use of the \emph{model-based elicitation strategy} described in \cite{belahcene-et-alEJORsubmitted}, as it permits to  obtain these parameters by asking the decision maker to only provide holistic preference judgment -- should some (fictitious) alternative be assigned to some category -- and builds the shortest questionnaire.

We opt for a more indirect setup, close to a machine learning paradigm, where a set of reference assignment is given and assumed to describe the decision maker's point of view, and the aim is to \emph{extend} these assignments with a NCS model. In this context, we usually refer to an \emph{assignment} as a function mapping a subset of \emph{reference alternatives} $\ \X^\star \subset \X\ $ to the ordered set of classes $C^1 \prec \dots \prec C^p$. These reference alternatives highlight values of interest on each criterion $i\in \mathcal{N},\ \X^\star_i := \bigcup_{x\in\X^\star} x_i $. We are looking for suitable preference parameters specifying a non compensatory sorting model, i.e. a tuple of profiles $\langle b \rangle$ satisfying \eqref{eq : increasing profiles} and an upset of coalitions $\T\subset2^\N$ (respectively, non-negative voting parameters $(\langle w \rangle,\lambda)$ of a majority rule) so that U-NCS$_{(\langle b \rangle,\T)}$ (respectively, MR-Sort$_{(\langle b \rangle,\langle w \rangle,\lambda)}$) maps all reference alternatives $x\in\X^\star$ to its assigned class $A(x)$. 

Throughout this paper, we assume this expression of preference is free of noise. We are only interested in determining if the given assignment can be represented in the non compensatory sorting model.

\section{SAT formulation for learning \NCS}
\label{section : sat}

In this section, we begin by giving a brief reminder of some key concepts regarding boolean satisfiability problems (SAT). Then, we proceed by describing the pivotal contribution of this work: the encoding of the problem of representing a given assignment in the U-NCS model as a SAT problem. We conclude this section by providing the decoding procedure that prove this SAT formulation is equivalent to the original problem, and can be used in the context of Algorithm \ref{algo1} together with a SAT solver.

\subsection{Boolean satisfiability (SAT)}

A boolean satisfaction problem consists in a set of boolean variables $V$ and a logical proposition about these variables $f : \{0,1\}^V \rightarrow \{0,1\}$. A solution $v^\star$ is an assignment of the variables mapped to 1 by the proposition: $f(v^\star)=1$. A binary satisfaction problem for which there exists at least one solution is \emph{satisfiable}, else it is \emph{unsatisfiable}. Without loss of generality, the proposition $f$ can be assumed to be written in conjunctive normal form: $f = \bigwedge_{c\in\mathcal{C}}c$, 
where each \emph{clause} $c\in\mathcal{C}$ is itself a disjunction in the variables or their negation $\forall c\in\mathcal{C}, \exists c^+, c^- \in \mathcal{P}(V): c = \bigvee_{v\in c^+}v \vee \bigvee_{v\in c^-}\neg v$, so that a solution  satisfies at least one condition (either positive or negative) of every clause.

The models presented hereafter make extensive use of clauses where there is only one non-negated variable (a subset of \emph{Horn clauses}): $a\vee \neg b_1 \vee \dots \vee \neg b_n$, which represent the logical implication $(b_1 \wedge \dots \wedge b_n) \Rightarrow a$.

It is known since Cook's theorem \cite{Cook1971} that the Boolean satisfiability problem is NP-complete. Consequently, unless $P=NP$, we should not expect to solve generic SAT instances quicker than exponential time in the worst case. Nevertheless, efficient and scalable algorithms for SAT have been -- and still are -- developed, and are sometimes able to handle problem instances involving tens of thousands of variables and millions of clauses in a few seconds (see e.g. \cite{moskewicz2001,sathandbook}).

\subsection{A SAT encoding of a given assignment in U-NCS}
\begin{subequations}
\begin{definition}[SAT encoding for U-NCS] Let $A : \X^\star \rightarrow C^1 \prec \dots \prec C^p$ an assignment. We define the boolean function $\phi^\textit{SAT}_A$ with variables:
\begin{itemize}\item $x_{i,h,k}$, indexed by a criterion $i\in \mathcal{N}$, a frontier between classes $1\le h \le p-1$, and a value $k\in \X^\star_i$ taken on criterion $i$ by a reference alternative, 
\item $y_B$ indexed by a coalition of criteria $B\subseteq \mathcal{N}$  
\end{itemize}
as the conjunction of clauses:
\begin{enumerate}[label=(3\alph*):]
\item For all criteria $i\in \mathcal{N}$, for all frontiers between adjacent classes $1\le h \le p-1$, for all ordered pairs of values $k < k' \in \X^\star_i$: 
\begin{equation} x_{i,h,k'} \vee \neg x_{i,h,k}  \label{clauses : ascending scales}\end{equation}
\item For all criteria $i\in \mathcal{N}$, for all ordered pairs of frontiers $1\le h < h' \le p-1$, for all values $k\in \X^\star_i$ : 
\begin{equation} x_{i,h,k} \vee \neg x_{i,h',k} \label{clauses : profiles hierarchy}\end{equation}
\item For all ordered pairs of coalitions $B\subset B' \subseteq \mathcal{N}$: 
\begin{equation} y_{B'} \vee \neg y_B \label{clauses : coalitions strength}\end{equation}
\item For all coalitions $B\subseteq \mathcal{N}$, for all frontiers $1 \le h \le p-1$, for all $u\in \X^\star : A(u)=C^{h-1}$ (i.e. reference alternatives just below the frontier)  : 
\begin{equation} (\bigvee_{i\in B} \neg x_{i,h,u_i}) \vee \neg y_B \label{clauses : weak alternatives}\end{equation}
\item For all coalitions $B\subseteq \mathcal{N}$, for all frontiers $1 \le h \le p-1$, for all $a\in \X^\star: A(a)=C^h$ (i.e. reference alternatives just above the frontier) : 
\begin{equation} (\bigvee_{i\in B}  x_{i,h,a_i}) \vee y_{\mathcal{N}\setminus B} \label{clauses : strong alternatives}\end{equation}

\end{enumerate}
\label{def : sat}
\end{definition}
\end{subequations}
Clauses of types \eqref{clauses : ascending scales}, \eqref{clauses : profiles hierarchy} and \eqref{clauses : coalitions strength} are easily interpreted as enforcers of some monotonicity conditions inherent to ordinal sorting and to the parameters of the U-NCS model:
\begin{enumerate}[label=(3\alph*):]
\item \textit{Ascending scales} -- if $k<k'\in\X^\star_i$ and $k$ is above some threshold $\frontier^{h}_i$, then so is $k'$. It is necessary and sufficient to consider the clauses where $k$ and $k'$ are consecutive values of $\X_i^\star$.
\item \textit{Hierarchy of profiles} -- if $1\le h<h' \le p-1$ and $k\in\X_i^\star$ is above the threshold $\frontier^{h'}_i$, then it is also above $\frontier^{h}_i$. It is necessary and sufficient to consider the clauses where $h'=h+1$.
\item \textit{Coalitions strength} -- if a coalition $B\subseteq \mathcal{N}$ is sufficient, then any coalition $B' \supseteq B$ containing $B$ is also sufficient. It is necessary and sufficient to consider the clauses where the coalition $B'$ contains exactly one more criterion than $B$, corresponding to the edges represented on Fig. \ref{fig : Terry criteria}.
\end{enumerate}

Clauses of types \eqref{clauses : weak alternatives} and \eqref{clauses : strong alternatives} ensure the correct representation of all reference alternatives contained by the assignment $A$ in the U-NCS model. They rely on the following lemmas.

\begin{lemma}
Let $A : \X^\star \rightarrow C^1 \prec \dots \prec C^p$ an assignment extended by a U-NCS model with profiles $\langle \frontier \rangle$ and sufficient coalitions $\mathcal{T}$.
If $B\subseteq \mathcal{N}$ is a coalition of criteria such that, there is an  alternative $x\in\X^\star$ stronger than the upper frontier of its class $\frontier^{A(x)+1}$ on every criterion in $B$, then this coalition is not sufficient. $$\forall B\subseteq \mathcal{N},\ [\exists x\in\X^\star\ :\  \forall i \in B,\ x_i \ge \frontier_i^{A(x)+1}] \Rightarrow B\notin \mathcal{T}$$
\label{good below}
\end{lemma}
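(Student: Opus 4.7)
The plan is a short direct argument that unpacks the definition of the favorable-coalition map $O_\N$ and the U-NCS sorting rule, then invokes the upward-closure property of $\T$. Let $x\in\X^\star$ be the witness alternative supplied by the hypothesis and set $k := A(x)$.

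First, I would rewrite the hypothesis in set-theoretic form. The assumption ``$x_i\ge_i \frontier_i^{k+1}$ for every $i\in B$'' is, by the very definition of $O_\N$, exactly the inclusion $B \subseteq O_\N(x,\frontier^{k+1})$.

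Next, I would extract the forbidden-coalition information from the sorting rule. Since $A$ is extended by \emph{U-NCS}$_{(\langle \frontier \rangle,\T)}$, the alternative $x$ is assigned to $C^k$; the ``upper frontier not cleared'' clause of the U-NCS definition then yields $O_\N(x,\frontier^{k+1})\notin\T$. A one-line contradiction now closes the proof: if one supposed $B\in\T$, then because $\T$ is upward closed in $(2^\N,\subseteq)$ and $B\subseteq O_\N(x,\frontier^{k+1})$, the coalition $O_\N(x,\frontier^{k+1})$ would itself belong to $\T$, contradicting what was just established. Hence $B\notin\T$.

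I do not expect a real obstacle; once the definitions are aligned the argument is essentially tautological. The only mild indexing subtlety concerns the top class $A(x)=p$, for which $\frontier^{p+1}$ is not part of the genuinely defined sequence $\frontier^1,\ldots,\frontier^{p-1}$. I would handle this by extending the authors' convention ($\frontier^p:=\top$) to $\frontier^{p+1}:=\top$, so that $O_\N(x,\frontier^{p+1})=\emptyset$, which is not in $\T$ (the empty coalition being insufficient in any non-degenerate model); the only admissible $B$ in that boundary case is then the empty coalition, and the conclusion follows immediately. No further case distinction is required.
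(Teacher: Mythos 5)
Your argument is correct and is essentially the paper's own proof: both assume $B\in\T$ for contradiction, use $B\subseteq O_\N(x,\frontier^{A(x)+1})$ together with the upward closure of $\T$ to conclude that $x$ would clear the upper frontier of its class, contradicting the fact that U-NCS$_{(\langle \frontier \rangle,\T)}$ extends $A$. You are merely more explicit than the paper about the upset step and about the top-class indexing convention, which is a harmless (indeed welcome) refinement rather than a different approach.
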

\begin{proof}Let $A$ an assignment, $(\langle \frontier \rangle,\mathcal{T})$ correct U-NCS parameters , $B$ a coalition of criteria and $x$ an alternative that satisfy the premises, and suppose $B$ is sufficient. The alternative $x$ would be better than the boundary $\frontier^{A(x)+1}$ and so would be assigned to a class strictly better than $A(x)$, and the NCS model with parameters $b$ and $\mathcal{T}$ would not extend the assignment.
\end{proof}

Clauses of type \eqref{clauses : weak alternatives} leverage Lemma \ref{good below} to ensure \textit{alternatives are outranked by the boundary above them}, relating insufficient coalitions to the strong points of an alternative.

\begin{lemma}Let $A : \X^\star \rightarrow C^1 \prec \dots \prec C^p$ an assignment extended by a U-NCS model with profiles $\langle \frontier \rangle$ and sufficient coalitions $\mathcal{T}$.
 \label{bad above}If $B\subseteq \mathcal{N}$ is a coalition of criteria such that, there is an  alternative $x\in\X^\star$ weaker than the lower frontier of its class $\frontier^{A(x)}$ on every criterion in $B$, then the complementary coalition is sufficient. $$\forall B\subseteq \mathcal{N},\ [\exists x\in\X^\star \ :\ \forall i \in B,\ x_i < \frontier_i^{A(x)}] \Rightarrow (\mathcal{N}\setminus B)\in \mathcal{T}$$
\end{lemma}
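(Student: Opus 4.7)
The plan is to unwind the U-NCS definition directly, then use the upward closure of $\mathcal{T}$. Set $h := A(x)$. Since the U-NCS model with parameters $(\langle b\rangle,\mathcal{T})$ extends the assignment $A$, the sorting function assigns $x$ to $C^h$, which by definition of U-NCS means $O_\mathcal{N}(x,b^h)\in\mathcal{T}$ (this is the ``good enough for the lower frontier'' half of the condition, and is what we need; the other half $O_\mathcal{N}(x,b^{h+1})\notin\mathcal{T}$ will not be used).

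Next, I would translate the hypothesis on $B$ into a set inclusion. By assumption, for every $i\in B$ we have $x_i <_i b_i^{A(x)} = b_i^h$, so $i \notin O_\mathcal{N}(x,b^h)$. Hence $B\cap O_\mathcal{N}(x,b^h)=\emptyset$, i.e., $O_\mathcal{N}(x,b^h)\subseteq \mathcal{N}\setminus B$.

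To conclude, I would invoke the fact that $\mathcal{T}$ is an upset of $(2^\mathcal{N},\subseteq)$: since $O_\mathcal{N}(x,b^h)\in\mathcal{T}$ and $O_\mathcal{N}(x,b^h)\subseteq \mathcal{N}\setminus B$, the superset $\mathcal{N}\setminus B$ also belongs to $\mathcal{T}$, which is the desired conclusion. (Equivalently, one could mirror the contrapositive style of the proof of Lemma \ref{good below}: if $\mathcal{N}\setminus B\notin\mathcal{T}$, then no subset of it lies in $\mathcal{T}$, contradicting $O_\mathcal{N}(x,b^h)\in\mathcal{T}$; this would make the argument read symmetrically to the preceding lemma.)

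There is no real obstacle here — the proof is a short chase through the definitions, and the only point that requires any care is picking the right frontier ($b^h$ rather than $b^{h+1}$), so that the membership of $O_\mathcal{N}(x,b^h)$ in $\mathcal{T}$ can be combined with the inclusion $O_\mathcal{N}(x,b^h)\subseteq \mathcal{N}\setminus B$ via upward closure.
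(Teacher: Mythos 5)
Your proof is correct and matches the paper's argument in substance: both hinge on the observation that $O_\N(x,\frontier^{A(x)})\subseteq\N\setminus B$ together with the upset property of $\T$, the paper merely phrasing it contrapositively (assume $\N\setminus B\notin\T$, deduce $x$ would be assigned below $A(x)$) where you argue directly from $O_\N(x,\frontier^{A(x)})\in\T$. No gap; the parenthetical contrapositive variant you mention is essentially verbatim the paper's proof.
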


\begin{proof}Let $A$ an assignment, $(\langle \frontier \rangle,\mathcal{T})$ correct U-NCS parameters, $B$ a coalition of criteria and $x$ an alternative that satisfy the premises, and suppose $\mathcal{N}\setminus B$ is insufficient. The set of criteria on which the alternative $x$ would be better than the boundary $b^{A(x)}$ is a subset of $\mathcal{N}\setminus B$, and would thus be considered insufficient. Hence, $x$ would be assigned to a class strictly worse than $A(x)$, and the NCS model with parameters $b$ and $\mathcal{T}$ would not extend the assignment.
\end{proof}

Clauses of type \eqref{clauses : strong alternatives} leverage Lemma \ref{bad above} to ensure \textit{alternatives outrank the boundary below them}, relating the weak points of an alternative to a complementary insufficient coalition.

We are now able to describe the decoding function required by Algorithm \ref{algo1} and prove the faithfulness of both the encoding and the decoding.

\subsection{Faithfulness of the SAT representation}

\begin{theorem}[from a U-NCS model representing an assignment to a solution of the SAT formulation]
\label{th : ncs to sat}
Given an assignment $A:\X^\star \rightarrow C^1 \prec \dots \prec C^p$, if the tuple of profiles $\langle \frontier \rangle$ satisfies \eqref{eq : increasing profiles}, the set $\mathcal{T}$ is an upset of coalitions of criteria, and the sorting function U-NCS$_{\langle b \rangle,\T}$ extends $A$, then the binary tuple:\begin{itemize}\item $x_{i,h,k}$, indexed by a criterion $i\in \mathcal{N}$, a frontier between classes $1\le h \le p-1$, and a value $k\in \X^\star_i$ taken on criterion $i$ by some reference alternative, 
and defined by $x_{i,h,k} = \begin{cases} 1,& \ \textit{if} \ k \ge \frontier^h_i \\ 0, & \textit{else}\end{cases}$ 
\item $y_B$ indexed by a coalition of criteria $B\subseteq \mathcal{N}$ and defined by $y_B=\begin{cases} 1,& \ \textit{if} \ B \in \mathcal{T} \\ 0, & \textit{else}\end{cases}$ 
\end{itemize}
is mapped to 1 by the Boolean function $\phi^\textit{SAT}_A$.

\end{theorem}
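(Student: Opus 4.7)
The plan is to verify, one family at a time, that each of the five clause types introduced in Definition \ref{def : sat} is satisfied by the natural valuation of the variables $x_{i,h,k}$ and $y_B$ induced by the given U-NCS parameters $(\langle b \rangle, \mathcal{T})$. Since $\phi^\textit{SAT}_A$ is a conjunction of clauses, checking that it evaluates to 1 amounts to checking each family separately, and inside each family each individual clause reduces to an elementary step of reasoning.

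The first three families are direct translations of monotonicity properties built into the U-NCS model. Clauses \eqref{clauses : ascending scales} follow from the totally ordered nature of $(\X_i, \le_i)$: if $k < k'$ and $k \ge b^h_i$, then $k' \ge b^h_i$, so $x_{i,h,k'}$ cannot be $0$ while $x_{i,h,k}$ is $1$. Clauses \eqref{clauses : profiles hierarchy} follow from condition \eqref{eq : increasing profiles}: since $b^1, \dots, b^{p-1}$ is non-decreasing under Pareto dominance, $b^h_i \le b^{h'}_i$ for $h < h'$, so $k \ge b^{h'}_i$ implies $k \ge b^h_i$. Clauses \eqref{clauses : coalitions strength} are exactly the statement that $\mathcal{T}$ is an upset of $(2^{\mathcal{N}}, \subseteq)$: $B \in \mathcal{T}$ and $B \subseteq B'$ imply $B' \in \mathcal{T}$.

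The last two families are applications of the two lemmas proven just before. For clauses \eqref{clauses : weak alternatives}, I consider an alternative $u$ with $A(u) = C^{h-1}$ and a coalition $B$ such that $y_B = 1$, i.e.\ $B \in \mathcal{T}$. The upper frontier of $u$'s class is $b^{A(u)+1} = b^h$, so Lemma \ref{good below} applied to $u$ and $B$ forces some $i \in B$ with $u_i < b^h_i$, hence $x_{i,h,u_i} = 0$ and the clause is satisfied. Clauses \eqref{clauses : strong alternatives} are treated symmetrically via Lemma \ref{bad above}: for $a$ with $A(a) = C^h$, whose lower frontier is $b^{A(a)} = b^h$, if $x_{i,h,a_i} = 0$ for every $i \in B$ (i.e.\ $a_i < b^h_i$ for every $i \in B$) then $\mathcal{N} \setminus B \in \mathcal{T}$, so $y_{\mathcal{N} \setminus B} = 1$.

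I do not anticipate any deep obstacle; the only subtlety is bookkeeping, namely to correctly align the frontier index $h$ appearing in the SAT variables with the frontier indices $A(u)+1$ and $A(a)$ that appear in the hypotheses of Lemmas \ref{good below} and \ref{bad above}. Once this indexing is pinned down (a reference alternative $u$ indexed in clause \eqref{clauses : weak alternatives} sits just below frontier $h$, while $a$ in clause \eqref{clauses : strong alternatives} sits just above frontier $h$), every remaining verification is mechanical and the theorem follows by conjunction over all clauses.
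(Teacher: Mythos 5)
Your proof is correct and follows essentially the same route as the paper's: the first three clause families are discharged by the monotonicity of the scales, condition \eqref{eq : increasing profiles}, and the upset property of $\mathcal{T}$, while clauses \eqref{clauses : weak alternatives} and \eqref{clauses : strong alternatives} are handled by applying Lemmas \ref{good below} and \ref{bad above} (you state the contrapositives explicitly where the paper argues by contradiction, but the content is identical). Your attention to the frontier-index bookkeeping ($u$ just below frontier $h$, $a$ just above) is exactly the point the paper's proof relies on.
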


\begin{proof}The clauses (\ref{clauses : ascending scales}) are satisfied because if $k<k'$ and $k$ is above some threshold $b^h$, then so is $k'$. The clauses (\ref{clauses : profiles hierarchy}) are satisfied because the frontier profiles $\langle b \rangle$ are assumed to satisfy \eqref{eq : increasing profiles} (hence, if a given value is above some threshold $\frontier^{h'}_i$, then it is also above inferior thresholds  $\frontier^h_i$ for $h<h'$). The clauses (\ref{clauses : coalitions strength}) are satisfied because $\T$ is assumed to be an upset (hence, if a coalition is deemed sufficient, then so are wider coalitions). If the \NCS model with profiles $\frontier^h$ and sufficient coalitions $T$ extends the given assignments, then clauses (\ref{clauses : weak alternatives}) are satisfied -- else, by Lemma \ref{good below}, one of the alternative $u\in \X^\star$ assigned to the class $C^{h-1}$ would outrank the profile $\frontier^h$ on a sufficient coalition of criteria - and so are clauses (\ref{clauses : strong alternatives}) - else, by Lemma \ref{bad above}, one alternative $a\in \X^\star$ assigned to class $C^{h}$ would not outrank the profile $\frontier^h$, as the set of criteria on which $a$ is better than $\frontier^h$ would be smaller than some insufficient coalition. \end{proof}

\begin{corollary}[Faithful encoding] Let $A$ be an assignment $A:\X^\star \rightarrow C^1 \prec \dots \prec C^p$. If $\phi^\textit{SAT}_A$ is unsatisfiable, then $A$ cannot be represented in the model U-NCS.

\end{corollary}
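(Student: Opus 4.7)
The plan is to prove the corollary as an immediate contrapositive of Theorem \ref{th : ncs to sat}. Suppose, toward a contradiction, that $A$ can be represented in U-NCS; then by definition there exist a tuple of profiles $\langle b \rangle$ satisfying \eqref{eq : increasing profiles} and an upset of coalitions $\mathcal{T}$ such that U-NCS$_{(\langle b \rangle,\mathcal{T})}$ extends $A$. Theorem \ref{th : ncs to sat} then provides an explicit binary assignment of the variables $x_{i,h,k}$ and $y_B$ that is mapped to $1$ by $\phi^\textit{SAT}_A$, contradicting the hypothesis that $\phi^\textit{SAT}_A$ is unsatisfiable.

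Since the construction of the satisfying assignment from $(\langle b \rangle, \mathcal{T})$ is entirely carried out in Theorem \ref{th : ncs to sat}, no further computation is needed. The only step worth writing explicitly is the invocation of the theorem and the observation that its conclusion is exactly the negation of unsatisfiability. Thus the corollary will read essentially as a one-line application of the contrapositive.

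There is no real obstacle here: the faithfulness of the encoding direction is already packaged by the theorem, and the corollary is just the logically dual statement phrased in terms of nonrepresentability. The only subtlety to flag, for clarity, is that the corollary by itself does not establish the converse (i.e.\ that every satisfying assignment of $\phi^\textit{SAT}_A$ decodes into a U-NCS model representing $A$); that direction is the role of the decoding procedure and is handled separately in the paper.
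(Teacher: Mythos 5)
Your proof is correct and follows exactly the intended route: the paper states this corollary without proof precisely because it is the immediate contrapositive of Theorem \ref{th : ncs to sat}, which is the argument you give. Your closing remark correctly identifies that the converse direction is handled separately by the decoding theorem.
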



\begin{theorem}[Decoding a solution of the SAT formulation into a U-NCS model] Given an assignment $A:\X^\star \rightarrow C^1 \prec \dots \prec C^p$, if the binary tuple:
\begin{itemize}\item $x_{i,h,k}$, indexed by a criterion $i\in \mathcal{N}$, a frontier between classes $1\le h \le p-1$, and a value $k\in \X^\star_i$ taken on criterion $i$ by a reference alternative, 
\item $y_B$ indexed by a coalition of criteria $B\subseteq \mathcal{N}$  
\end{itemize}
satisfies $\phi^\textit{SAT}_A(x,y)=1$
, then the profiles  $\langle \frontier \rangle$  defined by $\frontier^h_i := \min \{k\in\X^\star_i : x_{i,h,k} = 1\}$ satisfy \eqref{eq : increasing profiles}, the set of coalitions $\mathcal{T}:=\{B\subseteq \mathcal{N} : y_B=1\}$ is an upset, and the sorting function U-NCS$_{(\langle b \rangle,\T)}$, extends the assignment $A$.
\label{th : sat to ncs}
\end{theorem}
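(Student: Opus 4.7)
The plan is to verify three obligations in turn: (i) the profiles $\langle b \rangle$ satisfy~\eqref{eq : increasing profiles}, (ii) $\mathcal{T}$ is an upset of $2^\N$, and (iii) $U$-$NCS_{(\langle b\rangle,\T)}$ agrees with $A$ on every reference alternative $x\in\X^\star$.

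Item (i) follows from clauses~\eqref{clauses : profiles hierarchy}: fix a criterion $i\in\N$ and frontiers $h<h'$; the witness $v^\star=b^{h'}_i$ satisfies $x_{i,h',v^\star}=1$ by the defining minimum, and the clause $x_{i,h,v^\star}\vee\neg x_{i,h',v^\star}$ then forces $x_{i,h,v^\star}=1$, whence $b^h_i\le_i v^\star = b^{h'}_i$. Componentwise this reads $b^h\Delta b^{h'}$. Item (ii) follows directly from clauses~\eqref{clauses : coalitions strength}, which encode the implication $y_B=1\Rightarrow y_{B'}=1$ for $B\subsetneq B'$ (propagating, if necessary, along a chain of single-element additions).

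The substantive step is (iii). A preliminary observation, combining clauses~\eqref{clauses : ascending scales} with the defining minimum of $b^h_i$, is the equivalence
\begin{equation*}
x_{i,h,v}=1 \iff v\ge_i b^h_i, \qquad i\in\N,\ h\in\{1,\dots,p-1\},\ v\in\X^\star_i.
\end{equation*}
Fix $x\in\X^\star$ with $A(x)=C^\ell$ and set $S=O_\N(x,b^\ell)$, $S'=O_\N(x,b^{\ell+1})$; we must show $S\in\T$ and $S'\notin\T$. For $S\in\T$, instantiate clause~\eqref{clauses : strong alternatives} with $a=x$, $h=\ell$, and coalition $B=\N\setminus S$: for every $i\in B$ we have $x_i<_i b^\ell_i$, so by the equivalence $x_{i,\ell,x_i}=0$; the leftmost disjunction collapses, forcing $y_S=1$. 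For $S'\notin\T$, instantiate clause~\eqref{clauses : weak alternatives} with $u=x$, $h=\ell+1$, and coalition $B=S'$: for every $i\in S'$, $x_i\ge_i b^{\ell+1}_i$ gives $x_{i,\ell+1,x_i}=1$, so $\neg x_{i,\ell+1,x_i}=0$; the leftmost disjunction collapses, forcing $\neg y_{S'}=1$.

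The main obstacle lies in the preliminary equivalence: the defining minimum of $b^h_i$ only yields $x_{i,h,b^h_i}=1$ directly, and one must invoke the ascending-scale clauses to propagate the value $1$ upward throughout $\X^\star_i$. The extremal class assignments -- those for which no clause of type~\eqref{clauses : strong alternatives} or~\eqref{clauses : weak alternatives} is available for the relevant frontier -- are absorbed into the conventions $b^0=\bot$, $\T^0=\mathcal{P}(\N)$ and $b^p=\top$, $\T^p=\emptyset$, which make the corresponding half of the U-NCS assignment condition trivially satisfied.
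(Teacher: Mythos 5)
Your proof is correct and takes essentially the same route as the paper's: it establishes the key equivalence $x_{i,h,k}=1 \iff k \ge_i \frontier^h_i$ from clauses~\eqref{clauses : ascending scales} together with the defining minimum, reads \eqref{eq : increasing profiles} off clauses~\eqref{clauses : profiles hierarchy} and the upset property off clauses~\eqref{clauses : coalitions strength}, and then, for each reference alternative, invokes one instance of~\eqref{clauses : strong alternatives} for the lower frontier and one of~\eqref{clauses : weak alternatives} for the upper frontier. You merely make explicit the particular coalition $B$ to instantiate in each of those clauses and the treatment of the extremal classes, both of which the paper leaves implicit.
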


\begin{proof} Clauses (\ref{clauses : ascending scales}) ensure that $k' \ge k \Rightarrow x_{i,h,k'}\ge x_{i,h,k}$, so that $x_{i,h,k}=1 \iff k \ge b_i^h $. Clauses (\ref{clauses : profiles hierarchy}) ensure the tuple of profiles $\langle b \rangle$ satisfies \eqref{eq : increasing profiles}. Clauses (\ref{clauses : coalitions strength}) ensure the set $\mathcal{T}$ is an upset of coalitions. The sorting function U-NCS$_{(\langle b \rangle,\T)}$ extends the given assignment because, for each reference alternative $s\in\X^\star$ , there is a clause (\ref{clauses : strong alternatives}) that ensures $s$ outranks the lower frontier of its class (if $A(s) \succ C^1$), and a clause (\ref{clauses : weak alternatives}) that ensures $s$ does not outrank the upper frontier of its class (if $A(s) \prec C^p$).
\end{proof}

\begin{corollary}[Faithfullness of the SAT representation] The assignment $A$ can be represented in the model U-NCS if, and only if, $\phi^\textit{SAT}_A$ is satisfiable.
\end{corollary}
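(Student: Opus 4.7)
The plan is straightforward: this corollary is essentially the conjunction of the two preceding theorems, stating that the map from U-NCS models to SAT solutions (Theorem \ref{th : ncs to sat}) and the map from SAT solutions to U-NCS models (Theorem \ref{th : sat to ncs}) together yield a logical equivalence. So I would prove each direction of the bi-implication in turn by invoking the appropriate theorem.

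For the forward direction, I would assume that $A$ is representable in U-NCS, i.e.\ that there exist a tuple $\langle b \rangle$ satisfying \eqref{eq : increasing profiles} and an upset $\T \subseteq 2^\N$ such that U-NCS$_{(\langle b \rangle, \T)}$ extends $A$. I would then explicitly construct the binary tuple $(x_{i,h,k}, y_B)$ from $(\langle b \rangle, \T)$ using the indicator definitions given in the statement of Theorem \ref{th : ncs to sat}, and invoke that theorem to conclude that this tuple maps $\phi^\textit{SAT}_A$ to $1$. Hence $\phi^\textit{SAT}_A$ is satisfiable.

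For the backward direction, I would assume $\phi^\textit{SAT}_A$ is satisfiable and pick any satisfying assignment $(x^\star, y^\star)$. I would then construct the profiles $\langle b \rangle$ and the family of coalitions $\T$ by setting $b_i^h := \min\{k \in \X_i^\star : x^\star_{i,h,k} = 1\}$ and $\T := \{B \subseteq \N : y^\star_B = 1\}$, as prescribed in Theorem \ref{th : sat to ncs}. That theorem directly guarantees that $\langle b \rangle$ satisfies \eqref{eq : increasing profiles}, that $\T$ is an upset, and that U-NCS$_{(\langle b \rangle, \T)}$ extends $A$, so $A$ is representable in the U-NCS model.

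Since the corollary is a direct logical consequence of two already-proved results, I do not expect any real obstacle: the work was already done in the proofs of Theorems \ref{th : ncs to sat} and \ref{th : sat to ncs}, which handled the delicate encoding and decoding via Lemmas \ref{good below} and \ref{bad above}. The only care required is to note that the well-definedness of $b_i^h$ in the backward direction uses the fact that, by clauses \eqref{clauses : ascending scales} and the presence of reference alternatives, the set $\{k \in \X_i^\star : x^\star_{i,h,k} = 1\}$ is either empty (in which case $b_i^h$ may be taken as $\top$, or handled by the convention $b^p := \top$) or contains a minimum because $\X_i^\star$ is finite.
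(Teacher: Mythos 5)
Your proof is correct and matches the paper's intent exactly: the corollary is stated without an explicit proof precisely because it is the conjunction of Theorem \ref{th : ncs to sat} (representable implies satisfiable, via the indicator encoding) and Theorem \ref{th : sat to ncs} (satisfiable implies representable, via the decoding of profiles and coalitions). Your extra remark on the well-definedness of $b_i^h$ when $\{k \in \X_i^\star : x^\star_{i,h,k}=1\}$ is empty is a sensible point of care that the paper glosses over, but it does not change the argument.
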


\section{Learning MR-Sort using Mixed Integer Programming}
\label{section : MIP learning MR-SORT}

Learning the parameters of an MR-Sort model using mixed integer programming has been studied in \cite{leroy2011ADT}. We recall here the method used in \cite{leroy2011ADT} in order to obtain
the mixed inter program (MIP) formulation that infers an MR-Sort model on the basis of examples of assignments.

With MR-Sort, the condition for an alternative $a \in \X^\star$ to be assigned to a category $C^h$ (Equation \eqref{eq : majority rule}) reads:
\begin{equation*}
	a \in C^h \iff
	\begin{cases}
	\sum_{i = 1}^n c_{a,i}^{h-1} & \geq \lambda
        \\
	\sum_{i = 1}^n c_{a,i}^h & < \lambda
	\end{cases}
		\quad \text{with }\ c_{a,i}^{k} =
       	\begin{cases}
		w_i & \text{if } a_i \geq b^{k}_i,\\
		0 & \text{otherwise.}
		\end{cases}
\end{equation*}
The linearization of these constraints induces the use of binary variables. For each variable $c_{a,i}^{k}$, with $k = \{h-1, h\}$, we introduce a binary variable $\delta_{a,i}^k$ that is equal to $1$ when the performance of $a \in \X^\star$ is at least as good as or better than the performance of $b^k$ on the criterion $i$ and 0 otherwise.
For an alternative $a$ assigned to a category $C^h$ with $2 \leq h \leq p-1$, it introduces $2n$ binary variables. For alternatives assigned to one of the extreme categories, the number 	of binary variables is divided by two. The value of each variable $\delta_{a,i}^k$ is obtained thanks to the following constraints:
\begin{subequations}
\begin{equation}
	M (\delta_{a,i}^k - 1) \leq a_i - b_{i}^k < M \cdot \delta_{a,i}^k
    \label{mipconstraints:accept-reject}
\end{equation}
in which $M$ is an arbitrary large positive constant.
The value of $c_{a,i}^k$ are finally obtained thanks to the following
constraints:
\begin{equation}
	\left\{\begin{array}{rcl}
	0 \le & c_{a,i}^k & \leq w_i,\\
	\delta_{a,i}^k - 1 + w_i \le & c_{a,i}^k & \leq \delta_{a,i}^k .
	\end{array}\right.
     \label{mipconstraints:votes}
\end{equation}

\noindent The dominance structure on the set of profiles is ensured by the following constraints:
\begin{equation}
	\forall i \in \N, h = \{2, \ldots, p-1\},\ b^h_i \geq b^{h-1}_i \\	
     \label{mipconstraints:dominance}
\end{equation}
As the equation \eqref{eq : majority rule} defining the majority rule is homogenous, the coefficients $\langle w \rangle$ and $\lambda$ can be multiplied by any positive constant without modifying the upset of coalitions they represent. Thus, the following normalization constraint can be added without loss of generality:
\begin{equation}
	\sum_{i = 1}^n w_i = 1.
     \label{mipconstraints:normalize}
\end{equation}

To obtain a MIP formulation, the next step consists to define an objective function. In \cite{leroy2011ADT}, two objective functions are considered, one of which consists in maximizing the robustness of the assignments. It is done by adding continuous variables $x_a, y_a \in \mathbb{R}$ for each alternative $a \in \X^\star$ such that:
\begin{equation}
	\begin{cases}
	\sum_{i=1}^n c_{a,i}^{h-1} & = \lambda + x_a,\\
	\sum_{i=1}^n c_{a,i}^{h} & = \lambda - y_a.
	\end{cases}
     \label{mipconstraints:partial slack}
\end{equation}
The objective function consists in optimizing a slack variable $\alpha$ that is constrained by the values of the variables $x_a$ and $y_a$ as follows:
\begin{equation} \forall a\in\X^\star,
	\begin{cases}
	\alpha & \leq x_a,\\
	\alpha & \leq y_a.\\
	\end{cases} 
     \label{mipconstraints:slack}
\end{equation}
The combination of the objective function and all the constraints listed above leads to MIPs that can be found in \cite{leroy2011ADT}.

\begin{definition}[MIP-O formulation for MR Sort] Given an assignment $A$, 
we denote $\phi^{MIP-O}_A$ the mixed linear program with decision variables $\alpha$, $\lambda$, $\langle b_i^k \rangle_{i\in\N, k\in [1 .. p-1]}$, $\langle w_i \rangle_{i\in\N}$, $\langle c^h_{a,i} \rangle_{i\in\N, a\in\X^\star, h\in\{A(a)-1,A(a)\}}$, $\langle x_a \rangle_{a\in\X^\star}$, $\langle y_a \rangle_{a\in\X^\star} \in \mathbb{R}^+$ and \newline $\langle \delta^h_{a,i} \rangle_{i\in\N, a\in\X^\star, h\in\{A(a)-1,A(a)\}}\in\{0,1\}$, consisting in minimizing the objective $\alpha$, subject to the constraints (\ref{mipconstraints:accept-reject}), (\ref{mipconstraints:votes}), (\ref{mipconstraints:dominance}), (\ref{mipconstraints:normalize}), (\ref{mipconstraints:partial slack}) and (\ref{mipconstraints:slack}).

\end{definition}

\begin{theorem}[Faithfulness of the MIP-O formulation \cite{leroy2011ADT}] An assignment $A$ can be represented in the model MR-Sort if, and only if, $\phi^\textit{MIP-O}_A$ is feasible. If the tuple $\langle \alpha$, $\lambda$, $b$, $w$, $c$, $x$, $y$, $\delta \rangle$ is a feasible solution of $\phi^\textit{MIP-O}_A$, then the tuple of profiles $b$, the tuple of voting powers $w$ and the majority threshold $\lambda$ are suitable parameters of a MR-Sort model that extends the assignment $A$.
\label{th : mipo faithfulness}
\end{theorem}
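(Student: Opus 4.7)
The statement is an if-and-only-if about feasibility of $\phi^\textit{MIP-O}_A$, together with a concrete decoding recipe. The plan is to prove each direction by explicitly matching MR-Sort parameters with MIP variable assignments and checking the constraint families (\ref{mipconstraints:accept-reject})--(\ref{mipconstraints:slack}) one by one. Since the theorem is attributed to \cite{leroy2011ADT}, the argument is essentially a verification; the main work is book-keeping and paying attention to the few subtle points described below.

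For the ``only if'' direction (encoding), assume $A$ is extended by some MR-Sort$_{(\langle b \rangle, \langle w \rangle, \lambda)}$. First I would rescale $(\langle w \rangle, \lambda)$ by $1/\sum_i w_i$ (an MR transformation that preserves the upset, as noted before equation (\ref{mipconstraints:normalize})), so that (\ref{mipconstraints:normalize}) holds. Then I set $\delta_{a,i}^k := \mathds{1}[a_i \ge b_i^k]$, $c_{a,i}^k := w_i \cdot \delta_{a,i}^k$, $x_a := \sum_i c_{a,i}^{A(a)-1} - \lambda$, $y_a := \lambda - \sum_i c_{a,i}^{A(a)}$, and $\alpha := \min_a \min(x_a, y_a)$. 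The dominance and normalization constraints are immediate; the accept/reject constraints (\ref{mipconstraints:accept-reject}) hold for $M$ chosen larger than any $|a_i - b_i^k|$ over the finite set of reference values; the vote constraints (\ref{mipconstraints:votes}) are a routine case split on $\delta_{a,i}^k \in \{0,1\}$; finally the partial-slack equalities (\ref{mipconstraints:partial slack}) hold by definition of $x_a, y_a$ and the nonnegativity $x_a, y_a \ge 0$ comes from the MR-Sort characterization of $A(a)$. The two extreme classes ($h=1$ and $h=p$) need to be treated by convention (one of the two sums is vacuous, so the corresponding slack is dropped or set to $+\infty$).

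For the ``if'' direction (decoding), suppose $(\alpha, \lambda, \langle b \rangle, \langle w \rangle, c, x, y, \delta)$ is feasible. I would argue in three steps. First, constraint (\ref{mipconstraints:dominance}) gives that the tuple $\langle b \rangle$ satisfies \eqref{eq : increasing profiles}. Second, (\ref{mipconstraints:accept-reject}) forces $\delta_{a,i}^k = 1 \iff a_i \ge b_i^k$, because $M(\delta-1) \le a_i - b_i^k$ rules out the case $\delta=0, a_i \ge b_i^k$, and $a_i - b_i^k < M\delta$ rules out the case $\delta=1, a_i < b_i^k$ (taking $M$ strictly greater than the amplitude of criterion values). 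Combined with (\ref{mipconstraints:votes}), this yields $c_{a,i}^k = w_i \cdot \mathds{1}[a_i \ge b_i^k]$, so $\sum_i c_{a,i}^k$ is exactly the combined voting power of the coalition of criteria on which $a$ weakly outranks $b^k$. Third, using the partial-slack equalities (\ref{mipconstraints:partial slack}) together with $x_a, y_a \in \mathbb{R}^+$, for every reference alternative $a$ with $A(a)=C^h$ we get $\sum_i w_i \mathds{1}[a_i \ge b_i^{h-1}] \ge \lambda$ and $\sum_i w_i \mathds{1}[a_i \ge b_i^{h}] \le \lambda$, which is the MR-Sort membership criterion at the threshold $\lambda$.

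The main subtlety, and where the argument requires the most care, is the strict inequality in the definition of MR-Sort ($\sum c^h < \lambda$) versus the weak inequality $y_a \ge 0$ yielded by the MIP. In the encoding direction this is harmless because $\alpha$ can always be taken as (or strictly less than) the positive slack achieved by a strict model; in the decoding direction, the usual convention in \cite{leroy2011ADT} is to interpret MR-Sort with weak inequalities on both sides (or equivalently to absorb an $\epsilon$ into $\lambda$), so that the decoded model assigns $a$ to $C^h$ precisely when both conditions are met. Once that convention is fixed, the verification is direct and the theorem follows.
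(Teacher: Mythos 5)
The paper itself does not prove this theorem: it is stated with a citation to Leroy et al.\ and used as a black box (notably to derive the faithfulness of MIP-D), so there is no in-paper argument to compare yours against. Your verification is the right one and is essentially complete: the encoding direction (normalize the weights, set $\delta^k_{a,i}$ to the indicator of $a_i \ge b^k_i$, set $c^k_{a,i}=w_i\delta^k_{a,i}$, read off the slacks) and the decoding direction (constraints \eqref{mipconstraints:accept-reject} and \eqref{mipconstraints:votes} force $c^k_{a,i}=w_i$ exactly when $a_i\ge b^k_i$, then \eqref{mipconstraints:partial slack} gives the two threshold conditions) are both checked correctly, including the treatment of the extreme categories. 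You also put your finger on the one genuinely delicate point, which the paper itself concedes in the paragraph contrasting MIP-O with MIP-D: the MIP only yields $\sum_i c^{A(a)}_{a,i}\le\lambda$, while the MR-Sort assignment rule requires a strict inequality. Be aware that your fix by ``convention'' is the weakest part: read literally, the second sentence of the theorem quantifies over \emph{feasible} solutions, and a feasible solution with $y_a=0$ for some $a$ decodes into a model that assigns $a$ one class too high, so the statement is only safe either under the weak-inequality reading of MR-Sort or for solutions with $\alpha>0$ (which the optimal solution attains whenever $A$ is representable, since the finitely many coalition weights allow a small perturbation of $\lambda$ making all slacks strictly positive). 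Making that last observation explicit would close the only gap; otherwise the proof stands.
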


We are looking to compare this state-of-the-art formulation to the boolean satisfiability formulation we propose in the next section in terms of computational efficiency, and in terms of quality of the result. Yet, we suspect the two approaches differ in too many aspects to be meaningfully compared. The MIP-O formulation is based on a numerical representation of the problem, considers the set of every MR-Sort model extending the assignment, and selects the best according to the objective function -- here, returning the model that gives the sharpest difference in voting weights between sufficient and insufficient coalitions of criteria. Meanwhile, the SAT formulation is based on a logical representation of the problem, considers the wider set of every U-NCS model extending the assignment, and randomly yields a suitable model. In order to be able to credit the effects we would observe to the correct causes, we introduce a third formulation, called MIP-D, that helps bridging the gap between MIP-O and SAT. MIP-D is formally a mixed integer program with a null objective function. This trick enables us to use the optimization shell of the MIP formulations to express a decision problem assessing the satisfiability of the constraints, and yielding a random solution (which, in our context, represents a particular MR-Sort model), rather than looking for the best one in the sense of the objective function. Another instance of this configuration, where an optimization problem is compared to its feasibility version, can be found in \cite{procaccia2014}. Here, it should be noted that the MIP-D formulation is not exactly the feasibility version of MIP-O, as insufficient coalitions of criteria are characterized by a strict comparison. The optimization version circumvents this obstacle by maximizing the contrast in normalized voting power between sufficient and insufficient coalitions, while the feasibility version addresses it by leaving the total weight unconstrained, but requiring the minimal difference between sufficient and insufficient coalitions is at least one vote. This slight difference might account for some divergence of behavior we observe during our experiment (see Section \ref{section:implem}, and particularly \ref{subsection : sat vs mip}).

\begin{definition}[MIP-D formulation for MR Sort]
We denote MIP-D the mixed linear program with decision variables  $\langle b_i^k \rangle_{i\in\N, k\in [1 .. p-1]}$, $\langle w_i \rangle_{i\in\N}$, $\lambda$, $\langle x_a \rangle_{a\in\X^\star}$, $\langle y_a \rangle_{a\in\X^\star}$, \(\langle c^h_{a,i} \rangle_{i\in\N, a\in\X^\star, h\in\{A(a)-1,A(a)\}}\) $\in \mathbb{R}^+$ and $\langle \delta^h_{a,i} \rangle_{i\in\N, a\in\X^\star, h\in\{A(a)-1,A(a)\}}\in\{0,1\}$, consisting in minimizing the objective 0, subject to the constraints (\ref{mipconstraints:accept-reject}), (\ref{mipconstraints:votes}), (\ref{mipconstraints:dominance}),  (\ref{mipconstraints:partial slack}) and (\ref{mipconstraints:bigslack}), where:
\begin{equation}
	\forall a\in\X^\star, \begin{cases}
	1 & \leq x_i,\\
	1 & \leq y_i.\\
	\end{cases}
     \label{mipconstraints:bigslack}
\end{equation}

\end{definition}
\end{subequations}
\begin{theorem}[Faithfulness of the MIP-D formulation] An assignment $A$ can be represented in the model MR-Sort if, and only if, $\phi^\textit{MIP-D}_A$ is feasible. If the tuple $\langle \lambda$, $b$, $w$, $c$, $x$, $y$, $\delta \rangle$ is a feasible solution of $\phi^\textit{MIP-D}_A$, then the tuple of profiles $b$, the tuple of voting powers $w$ and the majority threshold $\lambda$ are suitable parameters of a MR-Sort model that extends the assignment $A$.

\end{theorem}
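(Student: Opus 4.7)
The plan is to adapt the proof of Theorem~\ref{th : mipo faithfulness} to account for the two structural differences between the formulations: $\phi^{\textit{MIP-D}}_A$ has a null objective and replaces the slack-maximization \eqref{mipconstraints:slack} by the hard-margin constraints \eqref{mipconstraints:bigslack}, while dropping the normalization \eqref{mipconstraints:normalize}.

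First, I would establish the \emph{if} direction. Given a feasible solution $\langle \lambda, b, w, c, x, y, \delta \rangle$ of $\phi^{\textit{MIP-D}}_A$, constraints \eqref{mipconstraints:dominance} guarantee that the profiles $\langle b \rangle$ satisfy \eqref{eq : increasing profiles}. The big-$M$ constraints \eqref{mipconstraints:accept-reject} enforce $\delta^k_{a,i} = 1 \Leftrightarrow a_i \geq b^k_i$, and \eqref{mipconstraints:votes} then forces $c^k_{a,i} = w_i \delta^k_{a,i}$, so that $\sum_i c^k_{a,i}$ equals the combined voting power of the coalition $O_\N(a, b^k)$ under weights $w$. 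Combining \eqref{mipconstraints:partial slack} with \eqref{mipconstraints:bigslack} then yields $\sum_i c^{A(a)-1}_{a,i} \geq \lambda + 1 > \lambda$ and $\sum_i c^{A(a)}_{a,i} \leq \lambda - 1 < \lambda$ for every $a \in \X^\star$, which by \eqref{eq : majority rule} means the coalition $O_\N(a, b^{A(a)-1})$ is sufficient while $O_\N(a, b^{A(a)})$ is insufficient for the majority rule with parameters $(w, \lambda)$. Hence the MR-Sort model $(b, w, \lambda)$ extends $A$.

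For the \emph{only if} direction, suppose an MR-Sort model $(b^\star, w^\star, \lambda^\star)$ extends $A$. I would build a feasible MIP-D solution by a perturbation-and-scaling argument. The subtlety is that \eqref{eq : majority rule} permits the weak inequality $\sum_{i \in B} w_i \geq \lambda$ for sufficient coalitions, whereas \eqref{mipconstraints:bigslack} demands a separation of at least $1$ in combined voting power. Exploiting the homogeneity of the majority rule in $(w, \lambda)$, I would first pick $\epsilon := \tfrac12 \bigl(\lambda^\star - \max_{B \in I} \sum_{i \in B} w^\star_i\bigr) > 0$, where $I = \{B \subseteq \N : B \text{ insufficient}\}$ is finite; the parameters $(b^\star, w^\star, \lambda^\star - \epsilon)$ define the same classifier but now every inequality witnessed by a reference alternative is strict with some positive margin $\mu$. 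Rescaling $(w^\star, \lambda^\star - \epsilon)$ by $K := 1/\mu$ brings all margins to at least $1$, and setting $\delta$ and $c$ in the natural way while taking $x_a, y_a$ equal to the actual margins produces a feasible solution of $\phi^{\textit{MIP-D}}_A$.

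The main obstacle is the threshold perturbation: one must verify that lowering $\lambda^\star$ to $\lambda^\star - \epsilon$ neither promotes any insufficient coalition to sufficient status (controlled by the choice of $\epsilon$, exploiting the finiteness of $I$) nor demotes any sufficient coalition (automatic, since $\lambda$ only decreases and sufficient coalitions already satisfied $\sum w^\star_i \geq \lambda^\star > \lambda^\star - \epsilon$). Everything else follows by a direct adaptation of the linear-algebraic bookkeeping in the proof of Theorem~\ref{th : mipo faithfulness}.
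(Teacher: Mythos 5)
Your overall route is the same as the paper's: the published proof disposes of this theorem in three sentences by invoking Theorem~\ref{th : mipo faithfulness}, observing that dropping \eqref{mipconstraints:normalize} is harmless by homogeneity of \eqref{eq : majority rule}, and asserting that \eqref{mipconstraints:bigslack} merely asks for voting parameters ``large enough'' to separate sufficient from insufficient coalitions by one unit of voting power. What you add is the explicit perturbation-and-scaling argument that the paper leaves implicit --- choosing $\epsilon$ over the finitely many insufficient coalitions so that the weak inequality in \eqref{eq : majority rule} becomes a strict separation, then rescaling by $1/\mu$ to force unit margins --- together with a direct verification of the forward direction. That elaboration is correct as far as it goes, and it is exactly the content the paper's one-line proof gestures at.

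There is, however, one step in your ``only if'' direction that does not close, and it is worth flagging even though the paper's own proof skates over it as well. After rescaling, the weights $K w^\star_i$ may exceed $1$, whereas constraint \eqref{mipconstraints:votes} forces $w_i \le 1$: when $\delta^k_{a,i}=1$ it imposes simultaneously $c^k_{a,i} \ge \delta^k_{a,i}-1+w_i = w_i$, $c^k_{a,i}\le w_i$ and $c^k_{a,i} \le \delta^k_{a,i} = 1$. This cap is innocuous in MIP-O, where \eqref{mipconstraints:normalize} guarantees $w_i\le 1$ anyway, but in MIP-D it collides with the unit-margin requirement \eqref{mipconstraints:bigslack}: for instance, if the learning set witnesses a $2$-out-of-$3$ majority rule, feasibility would require $w_i+w_j\ge\lambda+1$ for the pairs and $w_i\le\lambda-1$ for the singletons, which is unsatisfiable once $w_i\le 1$. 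So ``setting $\delta$ and $c$ in the natural way'' does not yield a feasible point of $\phi^\textit{MIP-D}_A$ with the constraints as literally written; the scaling argument only goes through if the right-hand bounds of \eqref{mipconstraints:votes} are also relaxed (e.g.\ to $M\cdot\delta^k_{a,i}$). Since the paper's proof makes the same silent assumption, your attempt is no less rigorous than the source, but a complete proof must address this point explicitly.
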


\begin{proof} This theorem results from Theorem \ref{th : mipo faithfulness}, with only minor changes to the constraints. As noted previously, the normalization constraint \eqref{mipconstraints:normalize} has no effect on the feasibility
of the problem. Instead, constraints \eqref{mipconstraints:bigslack} ensure we are looking for voting parameters large enough to have at least a difference of one unit between the votes gathered by any sufficient coalition on the one hand and any insufficient coalition on the other hand.
\end{proof}


\section{Implementation}
\label{section:implem}
In this section, we study the performance of the formulation proposed in section \ref{section : sat}, both intrinsic and comparative with respect to state-of-the-art techniques. We implement Algorithm \ref{algo1}, using a state-of-the-art SAT solver, in order to solve instances of the problem of learning a U-NCS model, given the assignment of a set of reference alternatives. We also implement two formulations relying on Mixed Integer Programming, presented in Section \ref{section : MIP learning MR-SORT}, using an adequate solver.  We begin by describing our experimental protocol, with some implementation details.  Then, we provide the results of the experimental study concerning computation time of our program, and particularly the influence of learning set size, the number of criteria, and the number of classes, as well as elements of comparison between the three approaches.


\subsection{Experimental protocol and implementation details}
The algorithm we test takes as an input the assignment of a set of alternatives $\X^\star$, each described by a performance tuple on a set of criteria $\mathcal{N}$, to a set of classes $C^1 \prec \dots \prec C^p$. 

The performance of the solvers needs to be measured in practice, by solving actual instances of the problem and reporting the computation time required.
This experimental study is run on an ordinary laptop with Windows 7 (64 bit) equipped with an Intel Core i7-4600 CPU at 2.1 GHz and 8 GB of RAM. 

\subsubsection*{Dataset generation.}
In the scope of this paper, we only consider to use a carefully crafted, random dataset as an input. On the one hand, the algorithm we describe is not yet equipped with the capability to deal with noisy inputs, so we do not consider feeding it with actual preference data, such as the one found in preference learning benchmarks \cite{preflearning}. On the other hand, using totally random, unstructured, inputs makes no sense in the context of algorithmic decision. In order to ensure the preference data we are using makes sense, we use a decision model to generate it, and, in particular, a model compatible with the non compensatory stance we are postulating. Precisely, we use a MR-Sort model for generating the learning set, a model that particularize NCS and U-NCS by postulating the set of sufficient coalitions possess an additive structure (see Section \ref{subsection : ncs models}). 
This choice ensures the three formulations we are using should succeed in finding the parameters of a model extending the reference assignment.

When generating a dataset, we consider the number of criteria $|\mathcal{N}|$, the number of classes $p$, and the number of reference alternatives $|\X^\star|$ as parameters. We consider all criteria take continuous value in the interval $[0,1]$, which is computationally more demanding for our algorithm than the case where one criterion has a finite set of values. We generate a set of ascending profiles $\langle b \rangle$ by uniformly sampling $p-1$ numbers in the interval $[0,1]$ and sorting them in ascending order, for all criteria. We generate voting weights $\langle w \rangle$ by sampling $|\mathcal{N}|-1$ numbers in the interval $[0,1]$, sorting them, and using them as the cumulative sum of weights. $\lambda$ is then randomly chosen with uniform probability in the interval $]0.5,1[$. Finally, we sample uniformly $|\X^\star|$ tuples in $[0,1]^\mathcal{N}$, defining the performance table of the reference alternative, and assign them to classes in $C^1 \prec \dots \prec C^p$ according to the model $\mathcal{M}^0:=$MR-Sort$_{\langle b \rangle,\langle w \rangle,\lambda}$ with the generated profiles, voting weights, and qualified majority threshold.



\subsubsection*{Solving the SAT problem.}
We then proceed accordingly to Algorithm \ref{algo1}, translating the assignment into a binary satisfaction problem, described by sets of variables and clauses, as described by 
Definition \ref{def : sat}. This binary satisfaction problem is written in a file, in DIMACS format\footnote{\url{http://www.satcompetition.org/2009/format-benchmarks2009.html}}, and passed to a command line SAT solver - CryptoMiniSat 5.0.1 \cite{cryptominisat},  winner of the incremental track at SAT Competition 2016 \footnote{ \url{http://baldur.iti.kit.edu/sat-competition-2016/}}, released under the MIT license. If the solver finds a solution, then it is converted into parameters $(\langle b^{\mathrm{SAT}} \rangle,\T^{\mathrm{SAT}})$ for a U-NCS model, as described by Theorem \ref{th : sat to ncs}. The model $\mathcal{M}^{\mathrm{SAT}}:=$U-NCS$_{\langle b^{\mathrm{SAT}} \rangle,\T^{\mathrm{SAT}}}$ yielded by the program is then validated against the input. As the ground truth $\M^0$ used to seed the assignment is, by construction, a MR-Sort model and therefore a U-NCS model, Theorem \ref{th : ncs to sat} applies and we expect the solver to always find a solution. Moreover, as Theorem \ref{th : sat to ncs} applies to the solution yielded, we expect the U-NCS model returned by the program should always succeed at extending the assignment provided.

\subsubsection*{Solving the MIP problems.} We transcribe the problem consisting of finding a MR-Sort model extending the assignment with parameters providing a good contrast into a mixed integer linear optimization problem described extensively in Section \ref{section : MIP learning MR-SORT} that we refer to as \emph{MIP-O}, where O stands for \emph{optimization}. In order to bridge the gap between this optimization stance and the boolean satisfiability approach that is only preoccupied with returning any model that extends the given assignment, we also transcribe the problem consisting of finding \emph{some} MR-Sort model extending the assignment into a MIP feasibility problem (optimizing the null function over an adequate set of constraints), also described in Section \ref{section : MIP learning MR-SORT} that we refer to as \emph{MIP-D}, where D stands for \emph{decision}. These MIP problems are solved with Gurobi 7.02, with factory parameters except for the cap placed on the number of CPU cores devoted to the computation (two), in order to match a similar limitation with the chosen version of the SAT solver. When the solver succeeds in finding a solution before the time limit -- set to one hour -- the sorting functions returned are called $\mathcal{M}^{\mathrm{MIP-O}}$ and $\mathcal{M}^{\mathrm{MIP-D}}$, respectively.

\subsubsection*{Evaluating the ability of the inferred models to restore the original one.}
In order to appreciate how ``close'' a computed model $\mathcal{M}^c\in\{\mathcal{M}^{\mathrm{SAT}}$,$\mathcal{M}^{\mathrm{MIP-D}}$,$\mathcal{M}^{\mathrm{MIP-O}}\}$ is to the ground truth from which the assignment example were generated ${\cal M}^0$, we proceed as follows: we sample a large set of $n$ profiles in $\X=[0,1]^\mathcal{N}$ and compute the assignment of these profiles according to the original and computed MR Sort models (${\cal M}^0$ and $\mathcal{M}^c$). On this basis, we compute $err-rate$ the proportion of ``errors'', \textit{i.e.} tuples which are not assigned to the same category by both models. To obtain a reasonable sample for $\X$, we vary size of the sample of $\X=[0,1]^\mathcal{N}$ according to the number of criteria $|\mathcal{N}|$: $n=Max(Min( 4^{|\mathcal{N}|},3.10^5),10^4))$ 



\subsection{Intrinsic performance of the SAT formulation}
We run the experimental protocol described above by varying the various values of the parameters governing the input. In order to assess the intrinsic performance of Algorithm \ref{algo1} we consider all the combinations where\begin{itemize}

\item the number of criteria $|\mathcal{N}|$ is chosen among $\{5,7,9,11,13\}$;
\item the number of reference alternatives $|\X^\star |$ is chosen among $\{$16, 32, 64, 128, 256, 512$\}$.
\item the number of categories $p$ is chosen among $\{2,3\}$
\end{itemize}
For each value of the triplet of parameters, we sample 100 MR-Sort models $\M^0$, and record the computation time ($t$) needed to provide a model $\M^{\mathrm{NCS}}$

\begin{figure}[ht!]
\begin{center}
\includegraphics[width=\textwidth]{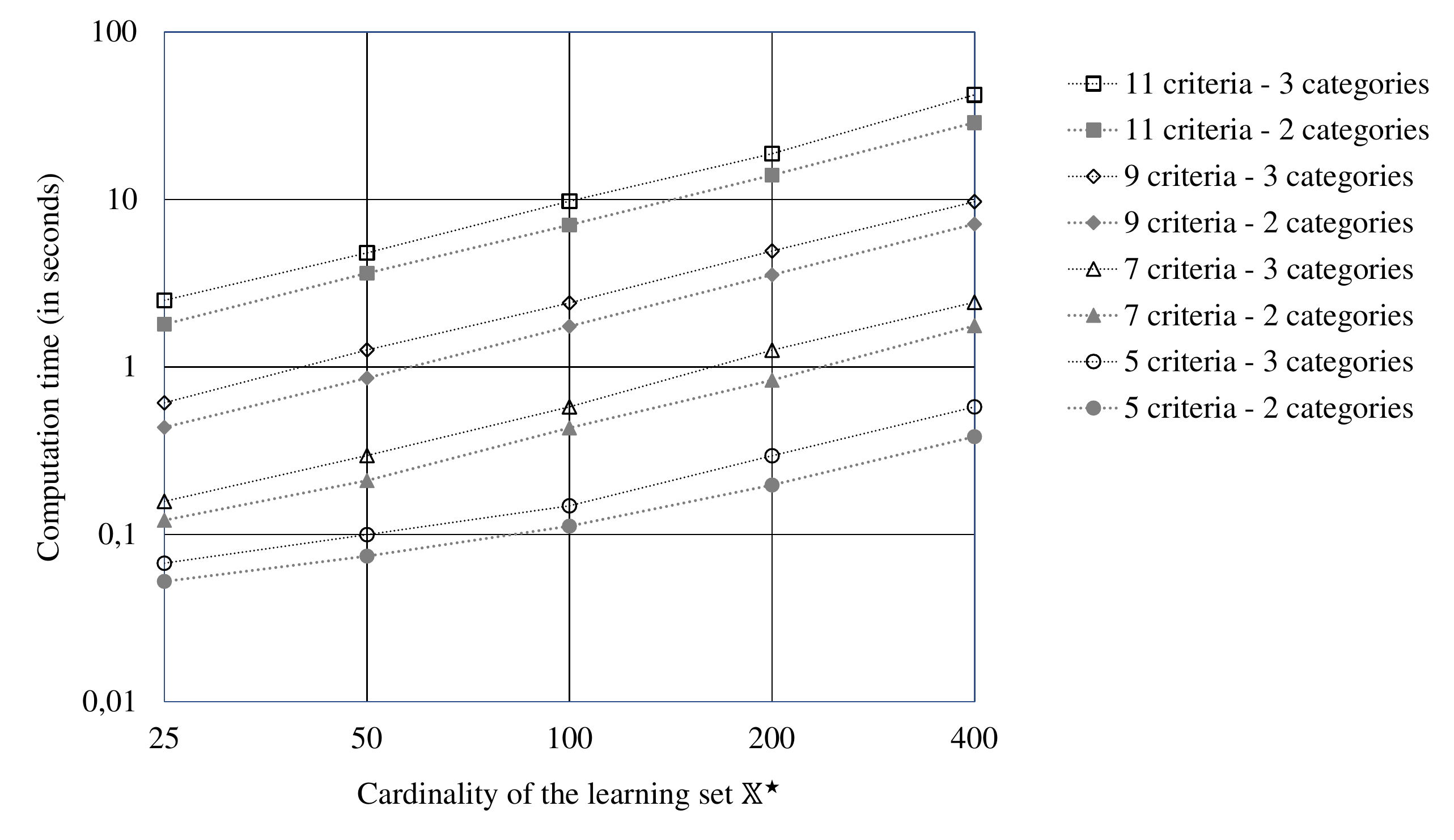}
\end{center}
\caption{Computation time by size of the learning set}
\label{Fig:computLearning}
\end{figure}

Figure \ref{Fig:computLearning} displays the time needed by Algorithm \ref{algo1} to compute $\M^{NCS}$, versus the number of reference alternatives $|\X^\star|$, both represented in logarithmic scale, in various configurations of the number of criteria. The fact that each configuration is seemingly represented by a straight line hints at a linear dependency between $\log t^{SAT}$ and $\log |\X^\star|$. The fact that the various straight lines, corresponding to various number of criteria, seem parallel, with a slope close to 1, is compatible to a law where $t_{SAT}$ is proportional to $|X^\star|$. The same observations in the plane (number of criteria $\times$ computation time) (not represented) leads to infer a law $$t^{SAT} \ \propto \ |\X^\star| \times 2^{|\mathcal{N}|}, $$ where the computing time is proportional to the number of reference alternatives and to the number of coalitions (corresponding to the number of $|\mathcal{N}|$-ary clauses of the SAT formulation). Finally, as a rule of thumb: \emph{the average computation time is about 10 s for 11 criteria, 3 categories and 100 reference alternatives; it doubles for each additional criterion, or when the number of reference alternatives doubles}.

\subsection{Comparison between the formulations}
\label{subsection : sat vs mip}

In order to compare between models, we focus on a situation with three categories, nine criteria, and 64 reference alternatives, serving as a baseline. We then consider situations deviating from the baseline on a single parameter -- either the number of categories $p$, from 2 to 5, or the number of criteria, among $\{5,7,9,11,13\}$, or the number of reference alternative among $\{16,32,64,128,256\}$. For each considered value of the triple of parameters, we sample 50 MR Sort models representing the ground truth $\M^0$, and we record the computation time $t$ needed to provide each of the three models $\M^\mathrm{NCS}$, $\M^\mathrm{MIP-D}$ and $\M^\mathrm{MIP-O}$, as well as the generalization indexes for the three models. The MIP are solved with a timeout of one hour.


\subsubsection*{Results on the computation time.}


\begin{figure}[ht!]
\begin{center}
\includegraphics[width=\textwidth]{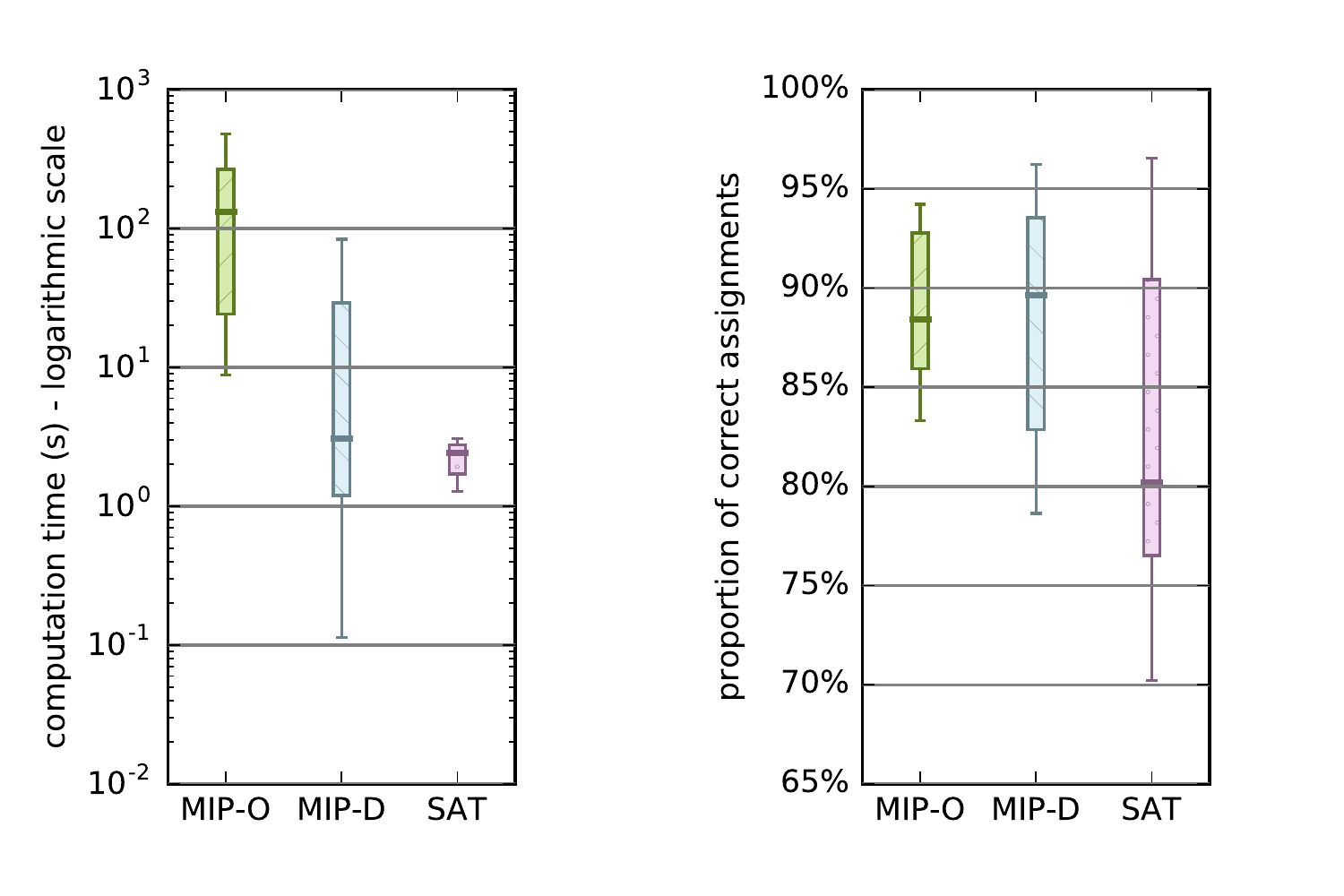}
\end{center}
\caption{Distribution of the computation time and the proportion of assignment similar to the ground truth for the three models in the baseline configuration: 9 criteria, 3 categories, 64 reference alternatives. Represented: median; box: $25-75\%$; whiskers: $10-90\%$.}
\label{Fig:timebaseline}
\end{figure}

For the three formulations under scrutiny and the set of considered parameters governing the input, the computation time ranges from below the tenth of a second to an hour (when the timeout is reached), thus covering about five orders of magnitude. The left side of Figure \ref{Fig:timebaseline} depicts the distribution of the computation time for the baseline situation (9 criteria, 3 categories, 64 reference assignments).
While the computing time for the SAT and the MIP-D formulations seem to be centered around similar values (with $Med(t^{SAT}) \approx 2.4 s$ and $Med(t^{MIP-D}) \approx 3.1 s$ for the baseline), the distribution of the computing time for the SAT algorithm around this center is very tight, while the spread of this distribution for the MIP-D formulation is comparatively huge: the slowest tenth of instances run about a thousand time slower than the quickest tenth. The computation time of the MIP-O formulation appears about 50 times slower than the SAT, with a central value of $Med(t^{MIP-O}) \approx 130 s$, and covers about two orders of magnitude.

\begin{figure}[ht!]
\begin{center}
\includegraphics[width=\textwidth]{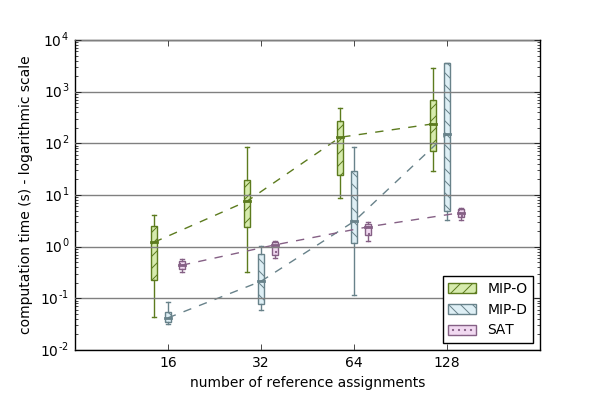}
\end{center}
\caption{Distribution of the computation time for the three models by number of reference assignments}
\label{Fig:ctime_vs_X}
\end{figure}

\begin{figure}[ht!]
\begin{center}
\includegraphics[width=\textwidth]{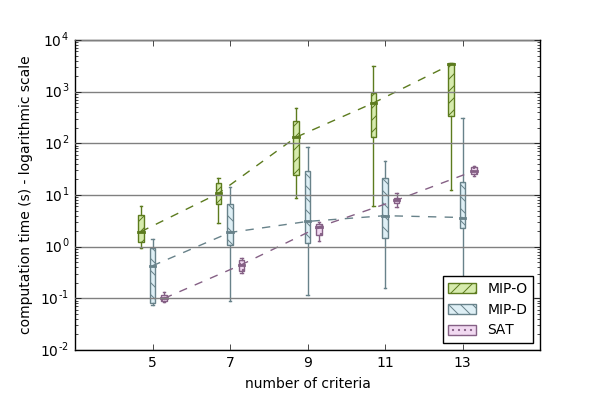}
\end{center}
\caption{Distribution of the computation time for the three models by number of criteria}
\label{Fig:ctime_vs_N}
\end{figure}

\begin{figure}[ht!]
\begin{center}
\includegraphics[width=\textwidth]{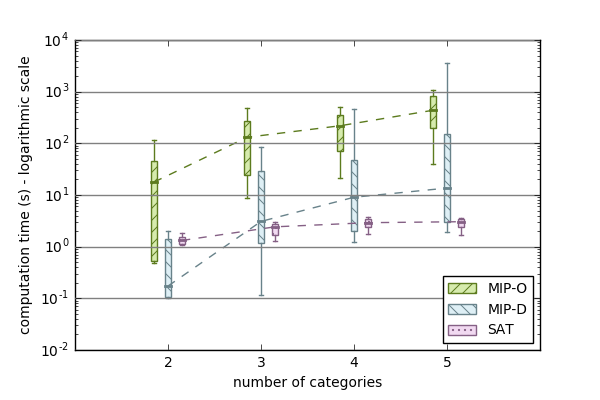}
\end{center}
\caption{Distribution of the computation time for the three models by number of categories}
\label{Fig:ctime_vs_p}
\end{figure}

In order to assess the influence of the parameters governing the size and complexity of the input, we explore situations differing from the baseline on a single parameter.\begin{itemize}
\item \emph{The number of reference assignments $|\X^\star|$}. Figure \ref{Fig:ctime_vs_X} indicates that the distribution of the computing time for SAT-based algorithm remains tightly grouped around its central value, and that this value steadily increases with the number of reference assignments. Meanwhile, the two MIP formulations display a similar behavior, with an increase of the central tendency steeper than the one displayed by the SAT, and a spread that widens when taking into account additional reference assignments.
\item \emph{the number of criteria $|\N|$}. Figure \ref{Fig:ctime_vs_N} indicates that the distribution of the computing time for SAT-based algorithm remains tightly grouped around its central value, and that this value steadily increases with the number of criteria. This increase is steeper in the case of the SAT and MIP-O formulations than for the MIP-D formulation.
\item \emph{the number of categories $p$}. Figure \ref{Fig:ctime_vs_p} displays an interesting phenomenon. The number of categories seems to have a mild influence on the computation time, without any restriction for the SAT-based algorithm, and as soon as there are three categories or more for the MIP-based algorithm, with a clear exception in the case of two categories, which yields instances of the problem solved ten times faster than with three or more categories.
\end{itemize}

\subsubsection*{Results on the ability of the inferred model to restore the original one.}

The right side of Figure \ref{Fig:timebaseline} depicts the distribution of the proportion of correct assignments (as compared to the ground truth) for the baseline situation (9 criteria, 3 categories, 64 reference assignments).
The situation depicted is conveniently described by using the distribution of outcomes yielded by the MIP-D formulation as a pivotal point to which we compare those yielded by the SAT and MIP-O formulations: the central 80\% of the distribution (between the whiskers) of outcomes for the MIP-O corresponds to the central half (the box) for the MIP-D, while the best half of the distribution of outcomes for the SAT corresponds to the central 80\% for the MIP-D. In other terms, compared to the MIP-D, the MIP-O offers consistently good results, while the SAT has a 50\% chance to yield a model that does not align very well with the ground truth.

Figures \ref{Fig:gene_vs_X}, \ref{Fig:gene_vs_N} and \ref{Fig:gene_vs_p} depict the variations of the alignment of the models yielded by the three algorithms with the ground truth with respect to the number of reference assignments, of criteria, or of categories, respectively. The experimental results display a tendency towards a degradation of this alignment as the number of criteria or the number of categories increase. Conversely, as expected, increasing the number of reference assignments noticeably enhances the restoration rate. The three algorithms seem to behave in a similar manner with respect to the modification of these parameters.



\begin{figure}[ht!]
\begin{center}
\includegraphics[width=\textwidth]{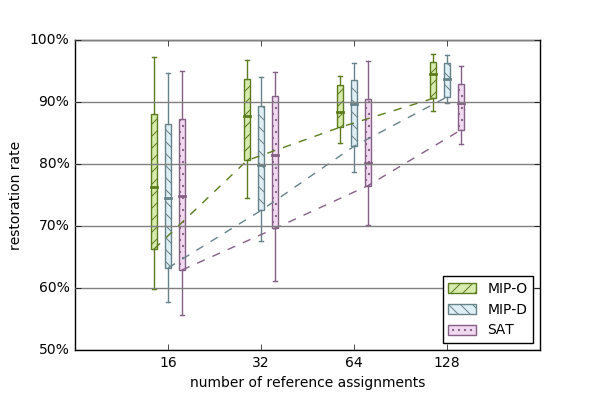}
\end{center}
\caption{Distribution of the generalization index for the three models by size of the learning set}
\label{Fig:gene_vs_X}
\end{figure}

\begin{figure}[ht!]
\begin{center}
\includegraphics[width=\textwidth]{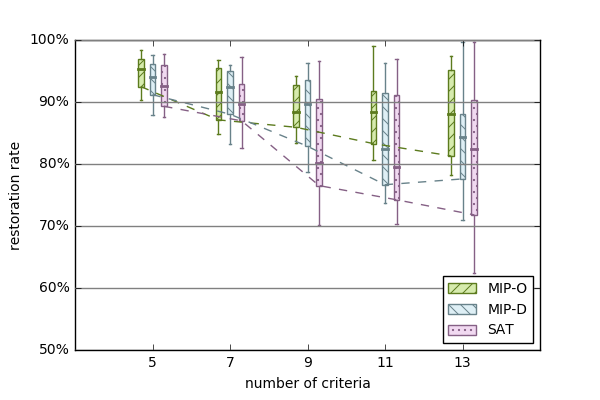}
\end{center}
\caption{Distribution of the generalization index for the three models by number of criteria}
\label{Fig:gene_vs_N}
\end{figure}

\begin{figure}[ht!]
\begin{center}
\includegraphics[width=\textwidth]{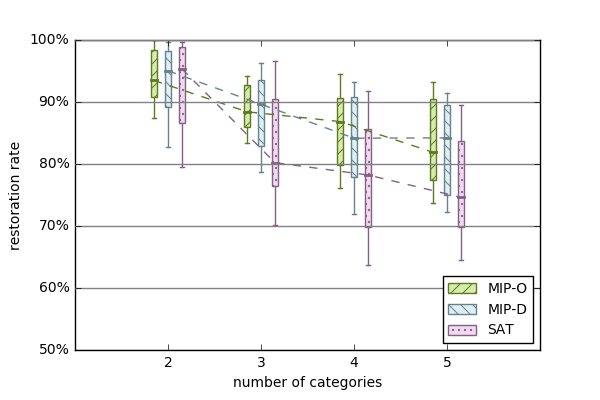}
\end{center}
\caption{Distribution of the generalization index for the three models by number of categories}
\label{Fig:gene_vs_p}
\end{figure}








\subsubsection*{Reliability.}
The three formulations expressing the problem we solve -- finding a non compensatory sorting model extending a given assignment of reference alternatives -- into technical terms are theoretically faithful. Moreover, as we generate the input assignment with a hidden \emph{ground truth} which itself obeys a non-compensatory sorting model, the search we set out to perform should provably succeed. Unfortunately, a computer program is but a pale reflection of an algorithm, as it is restricted in using finite resource. While we take great care in designing the experimental protocol in order to avoid memory problems, we have purposefully used off-the-shelf software with default setting to solve the formulations. While this attitude has given excellent result for the implementation of the SAT-based algorithm, which has never failed to retrieve a model that succeeds in extending the given assignment, the two MIP-based implementations have suffered from a variety of failures, either not terminating before the timeout set at one hour or wrongly concluding on the infeasibility of the MIP. 
 We report these abnormal behaviors in Table \ref{tab:failures}.

\begin{table}
\begin{center}
\begin{tabular}{|p{14em}|c|c|c|c|c|c|c|c|} \hline number of criteria & 5 & 7 & 9 & 11 & 13 & 7 & 9 & 9 \\  number of categories & 3 & 3 & 3 & 3 & 3 & 3 & 5 & 7 \\  number of reference assignments & 64 & 64 & 64 & 64 & 64 &128 & 64 & 64 \\ \hline  MIP-D & 4\%$^\dagger$ & 8\%$^\dagger$ & 4\% & 0 & 0 & 42\% & 10\% & 12\% \\ MIP-O & 0 & 0 & 0& 10\% & 48\% & 4\% & 0 & 0 \\ SAT & 0 & 0 & 0 & 0 & 0 & 0 & 0 & 0 \\ \hline \end{tabular}

\vskip1ex
\caption{Proportion of instances failing to retrieve a model. The default case is due to reaching the time limit, except for configurations marked with a dagger where the failure is due to an alleged infeasibility of the formulation.}\label{tab:failures}
\end{center}
\end{table}

\section{Discussion and perspectives}
\label{section:discussion}
In this section, we strive at interpreting the results presented in Section \ref{section:implem}. In Section \ref{subsection : influence parameters}, we address the influence of the parameters governing the size and structure of the input - the reference assignment we set out to extend with a non-compensatory sorting model - on the computing time of the programs implementing the three formulations modeling the problem. In Section \ref{subsection : knowledge representation}, we discuss the differing approaches to knowledge representation underlying these different formulations, and their practical consequences. 


\subsection{Influence of the parameters}
\label{subsection : influence parameters}
The influence of the various parameters ($|X^\star|$, the number of reference assignments; $|\N|$, the number of criteria; $p$, the number of categories) governing the input on the ability of the output model to predict the ground truth seeding the input is best understood from a machine learning perspective. The input assignments
form the learning set of the algorithm, while the number of criteria and the number of categories govern the number of parameters describing the non-compensatory sorting model. Hence, an increase in $|\X^\star|$ adds constraints upon the system, while increases in $|\N|$ or $p$ relieve some constraints, but demand more resources for their management.
\begin{itemize}
\item The comparison between MIP-O and MIP-D informs the influence of the loss function. This influence is threefold: optimizing this function demands a lot more time than simply returning the first admissible solution found; formalizing the problem of extending the input assignment with a model as an optimization problem incorporates a kind of robustness into the algorithm, which translates to a decrease in the number of failures; paradoxically, the strategy consisting in finding the most representative model (in the sense of the chosen loss function) does not yield models with a better alignment to the ground truth than the one consisting to return a random suitable model.

\item The MIP-D and SAT formulations implement the same binary attitude concerning the suitability of a non-compensatory model to extend a given assignment, and both arbitrarily yield the first-encountered suitable model. Nevertheless, algorithms based on these formulations display marked differences in behavior: while the running time of the SAT-based algorithm is very homogeneous between instances and follows very regular patterns when the input parameters change, the MIP-D algorithm behaves a lot more erratically, with some failures (displayed in table \ref{tab:failures}) and a tremendous spread. We credit this difference in behavior to a difference of approach to knowledge representation, as discussed in section \ref{subsection : knowledge representation}. Also, with the same input parameters, the model returned by the MIP-D algorithm seems on average to be more faithful to the ground truth than the model returned by the SAT algorithm. As both models return random suitable models in different categories (MR Sort for the MIP, and the superset NCS for the SAT, while the ground truth is chosen in the MR Sort category), we interpret the difference in the proportion of correct assignment to the respective volumes of the two categories of model, and discuss the pros and cons of assuming one or the other in section \ref{subsection : knowledge representation}.

\item Reference assignments are a necessary evil. On the one hand, they provide the information needed to entrench the model, and refine the precision up to which its parameters can be known. On the other hand, they erect a computational barrier which adds up more quickly for the MIP formulations we are considering than for the SAT one, as shown in Figure \ref{Fig:ctime_vs_X} . Overcoming this barrier demands time and threatens the integrity of the somewhat brittle numerical representation underlying the MIP-D formulation.

\item From the perspective of the model-fitting algorithm, the number of criteria and the number of categories are usually exogenous parameters, fixed according to the needs of the decision situation. The specific numbers of criteria we considered during the experiment, from 5 to 13, cover most of the typical decision situations considered in MCDA. Introducing more criteria demands to assess more parameters, which has a compound effect on complexity, as it requires at the same time to build a higher dimension representation of the models, and to provide more reference examples in order to be determined with a precision suitable to decision making. Apart from a noticeable exception (see below), the number of categories does not seem to have much influence (as shown on figures \ref{Fig:ctime_vs_p} and \ref{Fig:gene_vs_p}).


\item Underconstrained models are not very good at providing recommendations. When fed with scarce information, the task of finding a suitable extension is easy, but there are very little guarantees this extension matches the unexpressed knowledge and preferences of the decision maker concerning alternatives outside the learning set. We interpret the decrease in the ability to align with the ground truth as the number of criteria increases displayed on Figure \ref{Fig:gene_vs_N} as an expression of an \emph{overfitting} phenomenon, where too many parameters are chosen to faithfully represent a too little slice of the set of alternatives, but poorly represent cases never seen before.

\item Mixed integer programs can represent decision problems, in theory. Practically though, some complex inputs have proven overwhelming for the MIP-D formulation, whereas the MIP-O has shown more robustness, as evidenced by Table \ref{tab:failures}. It seems fair to assume this lack of stability is related to the absence of a normalization constraint such as \eqref{mipconstraints:normalize} in the MIP-D formulation .

\item MR Sort with two categories is structurally different than models with more than two categories. While we have defined it as a procedure where alternatives are compared holistically to a profile, it can also be described as an additive value sorting model with stepwise, non-decreasing, 2-valued marginals. The experimental results, both for the computing time  and the alignment with the ground truth (see figures \ref{Fig:ctime_vs_p} and \ref{Fig:gene_vs_p}, where the points corresponding to two categories are outliers with respect to the rest of the series) highlight this peculiarity, and tend to show that the value-based representation of the  MR-Sort model with two categories is computationally efficient. Determining a good lower bound on the difference of normalized voting power between sufficient and insufficient coalitions would therefore likely help alleviating this issue.

\end{itemize}

\subsection{Numeric or symbolic representation of coalitions}
\label{subsection : knowledge representation}
Our proposal to infer non compensatory sorting models from assignment examples using a SAT formulation relies on a symbolic representation of sufficient coalitions of criteria. It departs fundamentally from the state-of-the-art approach of representing the upset of sufficient coalitions with a numeric majority rule \eqref{eq : majority rule}
.

Obviously, U-NCS is more general than MR-Sort as additive weights/majority level induce a set of minimal criteria coalitions, while a set of minimal coalitions might not be additive. 
\cite{Ersek-et-al-DAM2017} studies the proportion of additive representations: all NCS models are additive up to 3 criteria, but the proportion of additive NCS models tends to be quickly marginal when the number of criteria increases. 
It is also possible possible to extend the MR-Sort model up to U-NCS by considering a capacity instead of a weight vector (see e.g. \cite{sobrieetal2015ADT}). This leads to MIP formulations more and more awkward as the arity of the capacity increases, increasing the number of decision variables and the computation time. Also, \cite{sobrieetal2015ADT} shows that a MR-Sort model learned from NCS generated examples provides a good approximation of this NCS model.

A distinctive feature of MR-Sort is its parsimony with respect to interaction between criteria, a notion that the SAT formulation of U-NCS fails to capture. However, there are many ways to additively represent a set of minimal coalitions, and the intuitive interpretation of the weights can therefore be misleading: there is no one to one correspondence between the tuples of voting powers and majority level, and the sets of additive coalitions of criteria. For instance, consider a three criteria problem in which minimal coalitions are all combination of two criteria out of three. This set of minimal coalitions can be represented by $w=(\frac{1}{3},\frac{1}{3},\frac{1}{3})$ and $\lambda=\frac{1}{2}$, or  $w=(0.49,0.49,0.02)$ and $\lambda=\frac{1}{2}$. It is obvious that these two numerical representations yield erroneously to two very distinct interpretations about the relative importance of criteria. In this sense, the symbolic representation avoiding weights used in the SAT formulation is more faithful than a numerical representation. As a consequence, this non-uniqueness of the additive representation penalizes the effectiveness of loss functions involved in MIPs.

Also, as mentioned in Section \ref{subsection : sat vs mip}, the feasibility version of the MIP suffers from numerical instability, perhaps because of the lack of a normalization constraint. The symbolic representation of sufficient coalitions circumvents the difficult mathematical question of providing a good lower bound on the worst case difference of normalized voting power between sufficient and insufficient coalitions.

\section{Conclusion}
\label{section:conclu}

In this paper, we consider the multiple criteria non-compensatory sorting model \cite{bouyssoumarchant2007a,bouyssoumarchant2007b} and propose a new SAT formulation for inferring this sorting  model from a learning set provided by a DM. Learning this model has already been addressed by the literature, and solved by the resolution of a MIP \cite{leroy2011ADT} or via a specific heuristic \cite{sobrieADT2013,sobrieetal2015ADT}. Due to high computation time, the MIP formulation can only apply to learning sets of limited size. Heuristic methods can handle large datasets, but can not ensure to find a compatible model with the learning set whenever it exists. Our new algorithm provides such guaranty. We implemented and tested our SAT formulation, and it outperforms MIP approaches in terms of computation time (reduction of the computation time by a factor of about 50). 

Moreover, it could have been the case that this good performance in terms of computing time would be counterbalanced by a limited ability of the inferred model in terms of generalization. Indeed, a MIP approach focuses the effort in finding a relevant representative model among the compatible models (through the use of an objective function), while our SAT approach does return the first compatible model found. 

Our experiments show that MIP and SAT approaches have similar performances in terms of generalization. Therefore, we believe this algorithm to be a strong advance in terms of learning NCS models based on learning sets, in particular when learning sets become relatively large.

Thanks to its efficiency -- finding a model compatible to some preference information takes seconds instead of minutes -- this algorithm is well suited to be embedded in an interactive process, where the decision maker is invited to interactively elicit a non compensatory sorting model by incrementally building a learning set (and possibly additional preference information).
In order to address real-world decision aiding situations, the algorithm we propose needs to be equipped with techniques permitting to account for noisy or inconsistent data. While the numeric formulations may rely on Lagrangian techniques to handle the requirement of correctly representing the data as a set of soft constraints rather than hard ones, the logic formulation we propose could usefully investigate the notions of maximally consistent or minimally inconsistent set of clauses (see e.g. \cite{BesnardGL15} for solving techniques, or e.g. \cite{mousseau2003} for an application in a MCDA context).
The increased speed, as compared to the previous MIP-based approach, opens the door to the exploration of the set of all U-NCS models extending a given assignment, in the vein of the version space theory \cite{mitchell82} and robust decision aiding \cite{utagms}. The knowledge representation underlying our approach may also permit to support a recommendation with an \emph{explanation} \cite{Amgoud2008,labreuche2011,belahceneIJCAI2017}.



\end{document}